\documentclass[11pt, a4paper]{berkeley-paul}

\usepackage{amsmath,amsfonts,bm}

\def\eqref#1{equation~\ref{#1}}

\def\1{\bm{1}}

\DeclareMathAlphabet{\mathsfit}{\encodingdefault}{\sfdefault}{m}{sl}
\SetMathAlphabet{\mathsfit}{bold}{\encodingdefault}{\sfdefault}{bx}{n}

\DeclareMathOperator*{\argmax}{arg\,max}

\usepackage{hyperref}
\usepackage{url}
\usepackage{booktabs}       %
\usepackage{amsfonts}       %
\usepackage{nicefrac}       %
\usepackage{microtype}      %
\usepackage[table]{xcolor}         %
\usepackage{colortbl}
\usepackage{natbib}
\usepackage[capitalise]{cleveref}
\usepackage{graphicx}
\usepackage{algorithm}
\usepackage{caption}
\usepackage{amsmath,amssymb}
\usepackage{algpseudocode}
\usepackage{wrapfig}
\usepackage{tabularx}
\usepackage{makecell}
\usepackage{amsthm}

\newtheorem{theorem}{Theorem}
\newtheorem{remark}{Remark}
\usepackage{multirow}
\usepackage{adjustbox}

\title{Reinforcement Learning via Advantage Guidance}

\author{
Zhiyuan Zhou, Sergey Levine \\ UC Berkeley
}

\newcommand{\methodname}[0]{Advantage Guidance}
\newcommand{\methodshort}{AG}

\newcommand{\ie}{\emph{i.e.},\ }

\newcommand{\naive}{na\"ive~}

\definecolor{themecolor}{HTML}{5C788F}
\definecolor{ourblue}{HTML}{73A4C6}  %
\definecolor{darkerblue}{HTML}{3B7EA1}
\hypersetup{colorlinks=true, allcolors=ourblue}

\begin{document}

\runningtitle{Reinforcement Learning via Advantage Guidance}

\teaserfigure{
    \begin{center}
        \vspace{1em}
        \includegraphics[width=\textwidth]{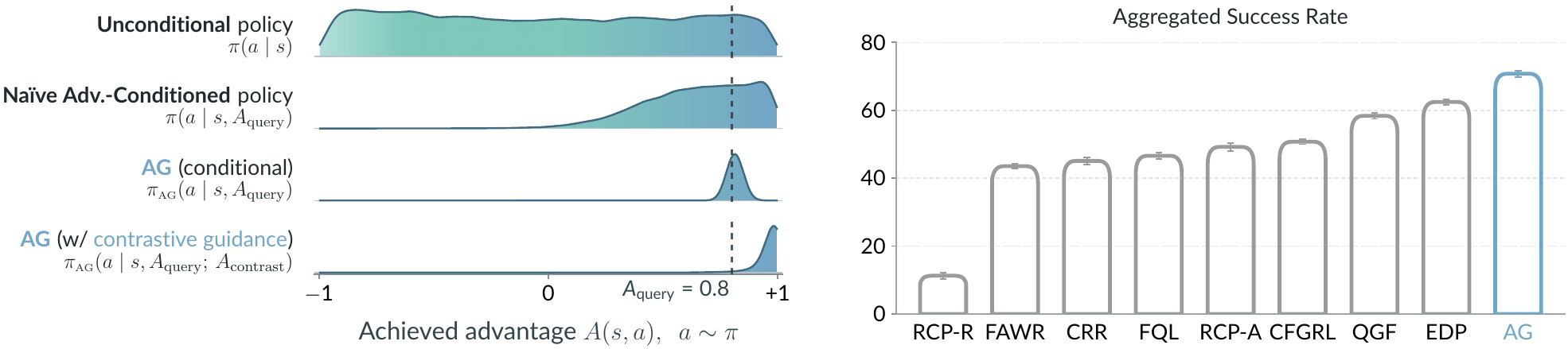}
        \captionof{figure}{We introduce a new RL method, \textcolor{darkerblue}{\textbf{\methodname{} (\methodshort{})}}, that uses a simple supervised learning objective with advantage conditioning while outperforming prior RL methods through effective conditional training and contrastive diffusion guidance during sampling. 
        \textit{Left:} Na\"ive advantage-conditioned policies often fail to match the requested advantage, producing actions whose advantages vary widely and frequently fall below the requested value. We call this the \textit{advantage alignment problem}. Our approach (AG) significantly improves advantage alignment, and our contrastive guidance formulation further enables advantage maximization rather than matching a prespecified value, as desired for policy extraction.
       \textit{Right:} Our method significantly outperforms baselines on $26$ long-horizon sparse-reward tasks across datasets with varying degrees of optimality.}
    \label{fig:teaser}
    \end{center}
}

\begin{abstract}
Reinforcement learning (RL) holds the promise of learning a policy that can outperform what is shown in the data. However, in practice, such systems suffer from instability and hyperparameter sensitivity. In contrast, supervised learning provides a stable and scalable training objective, but is fundamentally limited by the quality of the demonstrated behavior. In this paper, we ask whether we can perform policy optimization with something as simple and stable as supervised learning but as performant as RL. Unfortunately, na\"ively training the policy with supervised learning and conditioning on future rewards or advantages typically underperform dedicated RL methods. We introduce a new policy extraction method, \textcolor{darkerblue}{\textbf{\methodname{}}}, that retains the simple supervised learning structure with advantage conditioning while outperforming prior RL methods through effective conditional training and contrastive diffusion guidance during sampling. We show that \methodname{} can significantly improve a model's ability to respond to conditional advantage signals, and the contrastive guidance during sampling can use the learned advantage-conditioned policy to target the KL-regularized optimal policy. Empirically, we find that \methodname{} significantly outperforms prior state-of-the-art RL methods on 26 long-horizon, sparse reward tasks across datasets with varying degrees of optimality.
Code: \href{https://github.com/zhouzypaul/advantage-guidance}{ github.com/zhouzypaul/advantage-guidance}
\end{abstract}

\maketitle

\section{Introduction}
\label{sec:introduction}

Reinforcement learning (RL) holds the promise of improving policy performance beyond what is shown in the data, and can in theory outperform imitation learning under suitable conditions~\citep{sutton2018reinforcement, kumar2022should}.
However, RL algorithms suffer from lots of instability and hyperparameter sensitivity issues in practice~\citep{tsitsiklis1996analysis, fujimoto2018addressing, henderson2018deep}, especially over off-policy or offline data.
Imitation learning, on the other hand, has been shown to be stable and scalable for policy training with modern generative modeling techniques like diffusion and flow matching.
This raises the question: \textit{can we perform policy training with something as simple and stable as imitation learning, but as performant as RL?}

To achieve the stability of imitation learning, one might be tempted to directly apply the supervised-learning loss function, training the policy to maximize the likelihood of actions seen in the data. 
Previous works have explored accomplishing this with conditional, filtered, or weighted supervised learning~\citep{emmons2021rvs}.
One set of methods, which we will refer to as \textit{advantage-conditioned methods}~\citep{kumar2019reward, amin2025pi, gao2024act, qing2024a2po}, resorts to training a policy conditioned on advantage values, which estimate how ``good'' actions are.
During inference, the policy is queried with a high advantage to generate ``good'' actions.

Unfortunately, such \naive advantage-conditioned policies typically underperform dedicated RL methods (see \Cref{fig:teaser}).
This is in part because advantage-conditioning does not generally result in a policy that always favors actions with higher advantages as RL policies do: the resulting policy is restricted to the queried advantage level, and this restricts the degree of policy improvement.
Moreover, we find it difficult to train a policy to respond to advantage conditioning in the first place: the policy often cannot match the requested advantage (see \Cref{fig:teaser}), and can ignore the conditioning altogether and revert to unconditional imitation.

In this work, our goal is to train a policy with the simple and stable supervised learning objective conditional on continuous advantage estimates, while enabling the policy to generate actions that maximize advantage.
To accomplish this, we first derive a diffusion guidance formulation we call \textit{contrastive guidance} for sampling at test time, which can target actions that maximize the advantage under a KL behavior constraint.
Then, to effectively train a flow-matching conditional policy that responds to the conditional values, we re-parameterize the conditional policy as a residual flow on top of a reference unconditional flow field.
We also propose an \textit{action imagination} scheme: we use ``imagined actions'' sampled from the unconditional policy, and their advantage estimates, to train the advantage-conditioned policy under a broader action distribution for each state.

The resulting algorithm, \textcolor{darkerblue}{\textbf{\methodname{}}}, significantly improves the policy's ability to respond to advantage inputs, and can target the KL-regularized optimal policy during sampling. Across $26$ long-horizon, sparse reward tasks with both play-style and demonstration datasets in the offline RL setting, \methodname{} outperforms prior strong RL algorithms.

\section{Related Work}
\label{sec:related-work}

\paragraph{Offline RL and policy extraction.}
Offline RL aims to learn policies that outperform the behavior represented in a fixed dataset.
Prior works focus on learning value functions that mitigate erroneous extrapolation~\citep{an2021uncertainty, garg2023extreme, kostrikov2021offline, kumar2020conservative, xu2023offline}, as well as policy extraction from that value function~\citep{chen2022offline, fujimoto2021minimalist, hansen2023idql, nair2020awac, peng2019advantage, tarasov2024revisiting, wu2019behavior}.
However, many prior policy extraction methods can be difficult to apply in practice due to instability and hyperparameter sensitivity.
Another line of work, which closely relates to ours, circumvents this instability by converting the policy training objective to an imitation learning loss via advantage, reward, or return conditioning~\citep{kumar2019reward, chen2021decision, emmons2021rvs, srivastava2019training}, data filtering~\citep{chen2021decision, wang2020critic, chen2020bail}, and loss weighting~\citep{peng2019advantage, peters2007reinforcement, neumann2008fitted}.
We specifically study advantage conditioning, which avoids instability of loss-weighting, uses the full dataset for generalization, and has lower variance than return conditioning.
Existing advantage conditioning methods condition on continuous values~\citep{kumar2019reward, gao2024act, qing2024a2po} or binarized labels~\citep{amin2025pi, team2026n_0}, and during inference query with a fixed high advantage value, which may fall outside the training distribution, or learn an extra model to choose a high in-support query~\citep{kumar2019reward, gao2024act}.
In contrast, our method uses a form of contrastive diffusion guidance to maximize advantage without risking out-of-support advantage queries or needing additional models.
In addition, we find that previous methods struggle to respond to advantage inputs, and we introduce residual parameterization and action imagination training schemes to improve the policy's responsiveness to advantages.

\paragraph{Flow and diffusion policies for RL.}
Diffusion and flow matching have been widely studied for parameterizing the policy in RL~\citep{ren2024diffusion, ding2024diffusion, xue2025advantage, chen2024diffusion, koirala2025flow, chen2025one, zhou2026test}.
Using these iterative denoising methods is challenging because leveraging the action gradient requires backpropagating through a multi-step denoising process~\citep{wang2022diffusion, he2023diffcps, ding2023consistency, zhang2024entropy, wang2024diffusion, lv2025flow}, which can be expensive and unstable in practice~\citep{zhou2026test, park2025flow}.
Previous works avoid this unstable backpropagation process by  treating each denoising step as an action in the environment~\citep{black2023training, ren2024diffusion}, which blows up the problem horizon, or approximating the multi-step policy with single-step models~\citep{park2025flow, chen2025one, kang2023efficient}, which suffers from approximation errors and limited expressivity.
Another line of work avoids backpropagation entirely, instead directly guiding the iterative sampling process using critic gradient~\citep{zhou2026test} or classifier-free guidance (CFG)~\citep{frans2025diffusion, foffano2026advantage}.
Like the latter, our approach avoids critic gradients, which are difficult to estimate for high-dimensional or partially noised actions.
Most similar to our work is CFGRL~\citep{frans2025diffusion}, which applies CFG between an unconditional policy trained on the entire dataset and another ``filtered policy'' trained on the positive-advantage subset. Our approach instead trains on the entire dataset conditioned on continuous advantage values, and uses a form of contrastive guidance during sampling.
Empirically, we find that CFGRL usually underperforms in practice because its binary filtering criterion discards information on action quality.

\section{Preliminaries and Background}
\label{sec:preliminaries}

\paragraph{KL-regularized RL.}
We characterize the environment with state space $\mathcal{S}$, action space $\mathcal{A}$, a transition function $p(s' | s, a) : \mathcal{S} \times \mathcal{A}  \to \Delta(\mathcal{S})$, and a reward function $r(s): \mathcal{S} \rightarrow \mathbb{R}$.
A policy $\pi(a\mid s): \mathcal{S} \rightarrow \Delta(\mathcal{A})$ is a mapping from state to an action distribution, and value functions $V^\pi(s)$ and $Q^\pi(s, a)$ characterize the expected future return under a policy $\pi$. The advantage $A^\pi(s,a) = Q^\pi(s,a) - V^\pi(s)$ measures how much better action $a$ is than following $\pi$ at state $s$.
The KL-regularized RL objective is
$
    J(\pi) = \mathbb{E}_{\tau}\left[\sum_{t} \gamma^t r(s_t, a_t)\right] - 
    \beta \, \mathbb{E}_{s \sim p^{\pi}(s)}\big[D_{\mathrm{KL}}\!(\pi(\cdot \mid s) \,\|\, 
    \hat{\pi}(\cdot \mid s))\big]
$,
where $\beta$ controls regularization toward the reference policy $\hat{\pi}$.
The KL regularizer prevents the policy from exploiting $Q$-values, especially in offline RL.
It is well-known~\citep{dayan1992feudal,abdolmaleki2018maximum,peng2019advantage} that this objective under fixed-state has a closed form solution:
\begin{equation}
    \label{eqn:tilted-distribution}
    \pi_\mathrm{KL}^*(a\mid s) \propto \hat{\pi}(a\mid s) \cdot \exp (A(s, a) / \beta),
\end{equation}
which ``tilts'' the reference distribution to exponentially favor high advantage actions. We 
set this as the policy-extraction target distribution we want to achieve.

\paragraph{Advantage conditioned policies.}
The main idea for advantage-conditioned methods is straightforward: first, compute the advantage estimate $A(s, a)$ for each $(s, a)$ transition in the data via a learned value function or Monte Carlo sampling; then, train a policy $\pi(a \mid s, A(s, a))$ to predict actions given their advantage values; finally, query the policy with some specified target value $A_\mathrm{query}$.
Intuitively, this associates action quality with advantage values, allowing high-advantage queries to elicit good actions. This results in the policy
$\pi(a\mid s,A_{\mathrm{query}}) \propto
\hat{\pi}(a\mid s) \cdot \mathbf 1\!\left[|A^{\hat{\pi}}(s,a) - A_{\mathrm{qeury}}| < \epsilon \right] \; \text{where} \;\;0 < \epsilon \ll 1.$
This can be undesirable because $\pi(a | s, A_\mathrm{qeury})$ only incentivizes actions with advantage $A_\mathrm{qeury}$, but actually discourages actions with $A > A_\mathrm{qeury}$, unlike \Cref{eqn:tilted-distribution}. An excessively high $A_\mathrm{query}$ can place it outside the training distribution and result in bad actions.

\paragraph{Classifier free guidance.}
Diffusion and flow matching are generative models that transform noise into samples $x$ through an iterative denoising process $p_t(x_t)$. We use flow-matching in this paper for simplicity.
The denoising direction is closely related to the score function $\nabla_x \log p_t(x_t)$, which diffusion models learn directly and flow-matching models recover under appropriate transformations~\citep{song2020score, lipman2022flow}. 
Importantly, modifying the score during sampling allows us to guide the denoising process toward a desired distribution, such as one that favors higher-advantage actions.
For a generative model $p(x)$ with conditional labels $y$, classifier free guidance (CFG)~\citep{ho2022classifier} modifies the sampling at timestep $t$ to instead use score
$
\nabla_x\log p_t(x_t) +w\big[\nabla_x\log p_t(x_t\mid y)-\nabla_x\log p_t(x_t)\big]
= \nabla_x \log \big[ p_t(x_t)p_t(y\mid x_t)^w \big],
$
where $w>0$ is a guidance weight.
For generated clean samples, this tilts the generated distribution to $p(x)\exp(w \log p(y\mid x))$.
Intuitively, this generates samples that maximize the probability of having a specific value of $y$.
\citet{frans2025diffusion} showed that CFG can be used to achieve policy improvement in RL when $p_t(x_t)$ is the policy and the conditional variable $y$ is binary $y \in \{0, 1\}$ and $p(y=1\mid A(s, a))$ is a non-negative and monotonic increasing function of $A(s, a)$.

\section{Analysis of CFG for Advantage Conditioned Policies}
\label{sec:theoretical-issues-with-cfg}

Since CFG can induce policy improvement~\citep{frans2025diffusion}, one might wonder whether we can apply it to policies conditioned on continuous advantages to sample from our target, the KL-constrained optimal policy $\pi_\mathrm{KL}^*$ in \Cref{eqn:tilted-distribution}. In this section, we derive the solution of CFG on an advantage conditioned policy and show that it does not achieve this target.

Diffusion (and under some transformations flow matching) learn the score function, and so we can analyze the score function space directly. The score of the target distribution $\pi_\mathrm{KL}^*$ in \Cref{eqn:tilted-distribution} is
\begin{equation}
\label{eqn:score-of-tilt}
    \nabla_a \log \pi_\mathrm{KL}^*(a \mid s) 
    = \nabla_a \log \big[\hat{\pi}(a\mid s) \exp(A(s, a) / \beta) \big].
\end{equation}

Next, we will derive the score function when we perform CFG sampling on an advantage conditioned policy.
Let us denote the conditional policy as $\pi(a \mid s, y)$, where $y \sim Y$ is the conditional random variable. For simplicity, assume it has Gaussian likelihood: $Y \mid s, a \sim \mathcal{N}(A(s, a), \sigma_y^2)$ and $\log p_Y(y\mid A(s,a)) =-\frac{(y-A(s,a))^2}{2\sigma_y^2} -\frac12\log(2\pi\sigma_y^2)$.
We will denote the score of the conditional and unconditional policy as $g_y = \nabla_a \log \pi(a \mid s, y)$ and $g = \nabla_a \log \hat{\pi}(a \mid s)$.
Bayes rule gives:
$
g_y-g =\nabla_a \log \left[ \frac{p_Y(y\mid A(s, a)) \hat{\pi}(a\mid s)}{p(y\mid s)} \right] - \nabla_a \log \hat\pi(a\mid s)  
= \nabla_a \left[  -\frac{(y-A(s,a))^2}{2\sigma_y^2} \right].
$
Consequently, the score when sampling with CFG at guidance weight $w$ is
\begin{equation}
\label{eqn:cfg-induced-solution}
    g + w(g_y- g) 
    = \nabla_a \log \left[ \hat\pi(a\mid s) \exp\!\left(-\frac{w(y-A(s,a))^2}{2\sigma_y^2}\right) \right].
\end{equation}
This is generally not the same as the score of the desired tilted distribution in \Cref{eqn:score-of-tilt}. Intuitively, sampling with CFG induces a distribution that favors actions that match the target advantage value, and penalizes deviations above and below it; it does not always favor high-advantage actions as in \Cref{eqn:score-of-tilt}. Higher CFG weights more strongly favor matching the target advantage.

\section{Advantage Alignment Problem of Na\"ive Conditioned Policies}
\label{sec:problems}

While sampling with CFG does not necessarily produce actions that maximize the advantage, we might imagine that conditioning on a sufficiently high target advantage can still yield a good policy that is an improvement over $\hat{\pi}(a\mid s)$. 
By corollary, advantage conditioned policies should also enjoy this benefit because they equate to the special case of CFG sampling with guidance weight $w=1$. 
For simplicity, we focus our analysis on this special case, analyzing advantage-conditioned policies to understand whether they can effectively adapt to conditioning and generate high-advantage actions.

\begin{figure}[t]
    \centering
    \includegraphics[width=0.9\linewidth]{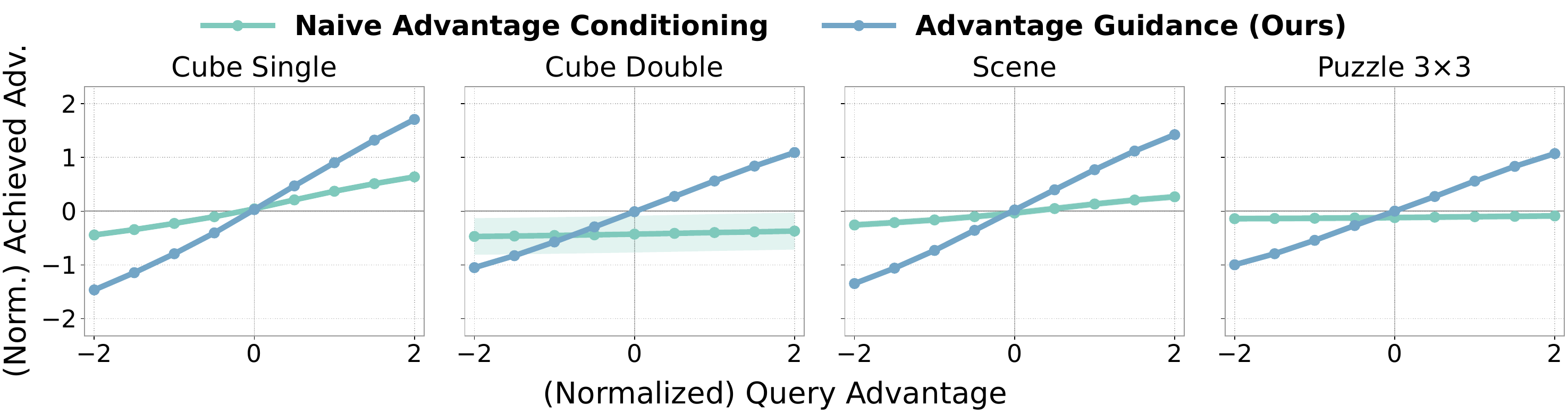}
    \caption{
    Na\"ive advantage conditioned policy suffers from the \textbf{advantage alignment problem}, generating actions whose advantages don't align with the requested value.
    $5$ tasks per domain.
    }
    \label{fig:not-pay-attention-to-advantage}
    \vspace{-1.5em}
\end{figure}
To measure how well a policy responds to the advantage signal, we define \textbf{\textit{advantage alignment}} as the extent to which the achieved advantage of a policy action matches the advantage used to query the conditional policy (\ie $A_\mathrm{query}$ vs. $A(s, \pi(\cdot\mid s, A_\mathrm{query}))$.
Intuitively, if a conditional policy responds properly to advantage inputs, it should be advantage aligned: actions generated conditioned on high advantage should have high advantages, and vice versa.
\Cref{fig:not-pay-attention-to-advantage} shows the advantage alignment on $20$ different OGBench~\citep{park2024ogbench} tasks across $4$ different domains, and the advantages shown are normalized using the mean and variance of the advantages of $32$ actions sampled from the same state.
A \naive advantage-conditioned policy $\pi(a\mid s, A)$, which we implement following RCP-A~\citep{kumar2019reward}, responds very weakly to advantage inputs, often generating actions that have the same advantages even when requested with different values, especially on hard long-horizon tasks.
When queried with a high advantage, it often generates actions whose advantages fall below the requested value.
We refer to this as the \textbf{\textit{advantage alignment problem}}.
And as we will show in \Cref{apdx:transformer-experiments}, using a more expressive policy architecture such as a transformer does not solve this problem, and misalignment is also reflected in policy performance when using different $A_\mathrm{query}$ (see \Cref{apdx:rcp-a-evals}).
In comparison, our method \methodname{} (which we will introduce later in \Cref{sec:method}) is substantially more responsive to advantage conditioning, with the resulting actions exhibiting achieved advantage values that closely align with requested advantages.

\begin{wrapfigure}{r}{0.5\linewidth}
    \centering
    \vspace{-1em}
    \includegraphics[width=\linewidth]{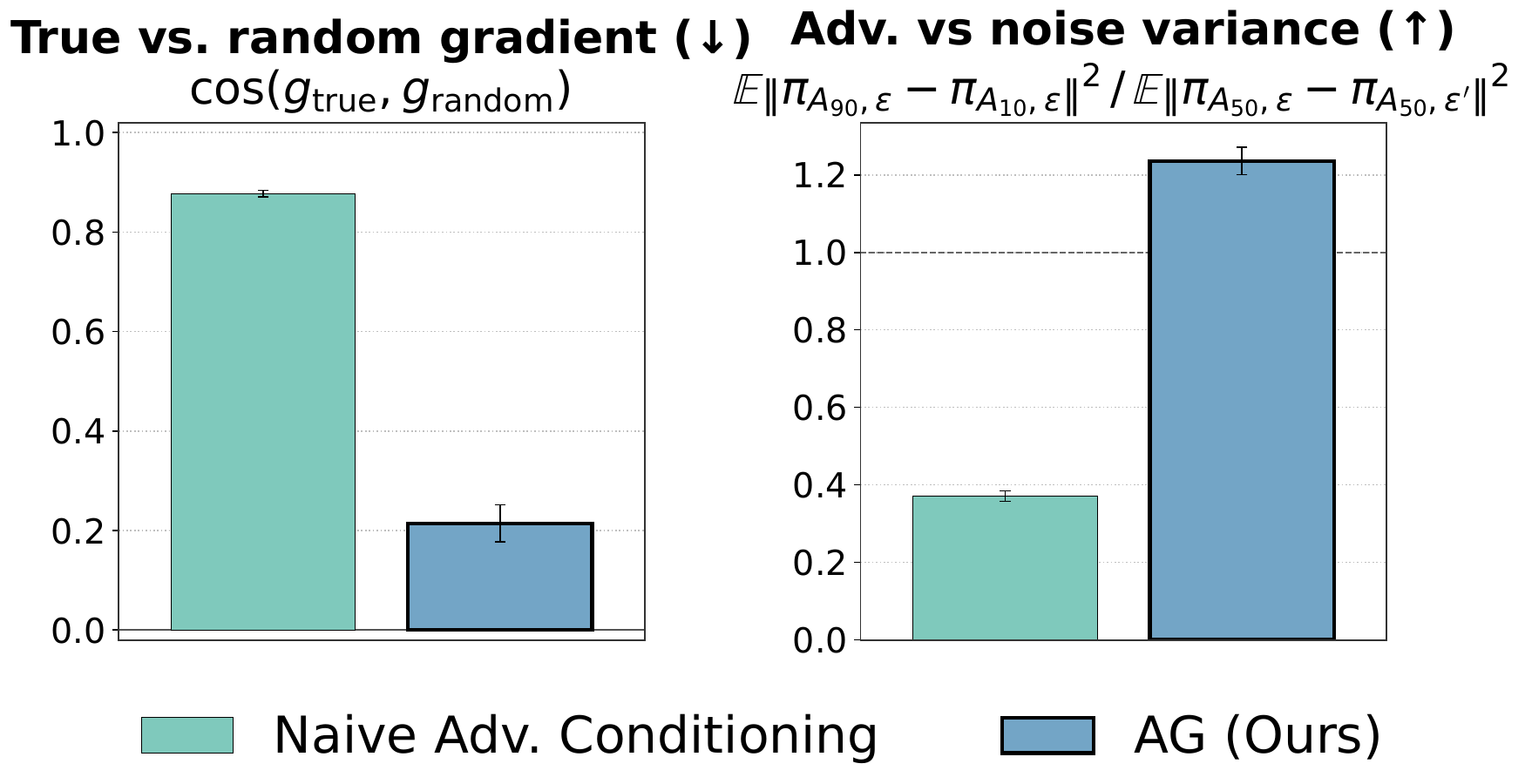}
    \caption{\textbf{Na\"ive advantage conditioned policies often ignore advantage inputs.} Averaged over $20$ OGBench tasks.}
    \label{fig:attention-challenges}
    \vspace{-1em}
\end{wrapfigure}
To gain a mechanistic understanding of the optimization process, we investigate (1) how the gradient of the policy loss responds to advantage conditioning, and (2) how strongly the output actions respond to different conditional signals.
First, we compare the gradient of the policy loss 
conditioned on the ground truth versus random advantage values. We plot their cosine similarity in \Cref{fig:attention-challenges} (left), and find that \naive advantage-conditioned policy has gradient cosine similarity close to $1$. This suggests that the loss is largely insensitive to advantage labels, leading the policy to ignore them.
So does the policy respond to any conditioning at all, or does it just ignore the advantage?
To investigate, we compare the sensitivity of generated actions to advantage inputs and initial flow noise. We measure $
\frac{\mathbb E\|\pi(a\mid s, A_{90}, \epsilon) - \pi(a\mid s, A_{10}, \epsilon)\|^2}
     {\mathbb E\|\pi(a\mid s, A_{50}, \epsilon)-\pi(a \mid, s, A_{50}, \epsilon')\|^2}, $
where $A_{n}$ is the $n$-th percentile advantage for state $s$ and $\epsilon, \epsilon'$ are different initial noises.
\Cref{fig:attention-challenges} (right) shows that varying the advantage changes the \naive agent's actions by less than half the effect of sampling noise, whereas our method's response exceeds it.

The analysis above shows that the policy network largely ignores the advantage value 
, effectively regressing to an unconditional policy $\pi(a\mid s)$. In the next section, we will present solutions to this challenge and show that our method can significantly improve advantage alignment.

\section{\methodname}
\label{sec:method}

In this section, we introduce our proposed method, \textcolor{darkerblue}{\methodname}, an advantage-conditioned policy extraction method that improves advantage alignment and can target the KL-constrained optimal policy through guidance. We first derive a guidance formulation we term \textit{contrastive guidance} that uses a policy conditioned on continuous advantage values to target $\pi_\mathrm{KL}^*(a\mid s) \propto \hat{\pi}(a\mid s) \exp(A(s, a) / \beta)$.
Then, we introduce practical techniques during training to improve advantage alignment, including reparameterizing the conditional policy as a residual flow over an unconditional flow field and training on multiple imagined actions per state.

\subsection{Contrastive Guidance for Advantage Maximization}
\label{sec:guidance-theory}
First, we aim to derive a sampling technique based on diffusion guidance that can enable the policy to generate advantage-maximizing actions rather than matching a prespecified value. Our target is the KL-regularized optimal policy $\pi_\mathrm{KL}^*$, which maximizes advantage under KL constraint to $\hat{\pi}(a\mid s)$.

Recall that for flow and diffusion policies, we can analyze the score function directly, and the score of our target distribution $\pi_\mathrm{KL}^*$ is in \Cref{eqn:score-of-tilt}. 
Following \Cref{sec:theoretical-issues-with-cfg}, we denote the conditional policy as $\pi(a\mid s, y)$ where $y \sim Y$ is the conditional random variable with likelihood $p_Y(y\mid A)$.
\begin{remark}[\textbf{Contrastive Guidance.}]
\label{thm:contrastive-guidance}
    Assume $p_Y$ belongs to exponential family $p_Y(y\mid A) = h(y) \exp\left(T(y)A - F(A)\right)$ where $h(y) \geq 0$ 
    and $F(A)$ is a log-partition function. 
    The action score of the KL-regularized optimal policy $\pi_\mathrm{KL}^*(a\mid s) \propto \hat{\pi}(a\mid s) \exp(A(s, a) / \beta)$ matches
    \begin{equation}
        \label{eqn:contrastive-guidance-score}
        \nabla_a \log \hat{\pi}(a\mid s) + w \left[ \nabla_a \log \pi(a\mid s, y_+) - \nabla_a \log \pi(a \mid s, y_-) \right]
    \end{equation}
    if some fixed $w, y_+, y_- \in \mathbb{R}$ satisfy $1/\beta = w [T(y_+) - T(y_-)]$.
\end{remark}
We provide a formalization and proof of this claim in \Cref{apdx:exponential-family-guidance}. 
In other words, if we sample according to \Cref{eqn:contrastive-guidance-score}, which is easily derived with the score of the conditional and unconditional policy, we can directly match the score of the desired exponentially-tilted distribution.
This is akin to performing diffusion guidance with guidance weight $w$ and guidance term $\nabla_a \log \pi(a\mid s, y_+) - \nabla_a \log \pi(a \mid s, y_-)$. 
Since the guidance term contains the score difference of a contrastive pair $(y_+, y_-)$, we call this \textit{contrastive guidance}.

\begin{remark}[\textbf{Advantage Maximization.}]
\label{rem:advantage-maximization}
If $\beta > 0$ (which is ensured if $T(y_+)>T(y_-)$ and $w>0$), the tilted policy
$\pi_\mathrm{KL}^*(a\mid s)\propto\hat{\pi}(a\mid s)\exp(A(s,a)/\beta)$
is the solution to the KL-constrained advantage maximization problem
$
\arg\max_{\pi(\cdot\mid s)}
\left\{
\mathbb{E}_{a\sim\pi(\cdot\mid s)}[A(s,a)]
-
\beta D_{\mathrm{KL}}\!\left(
\pi(\cdot\mid s)\,\|\,\hat{\pi}(\cdot\mid s)
\right)
\right\}.
$
\end{remark}
This follows directly from \Cref{thm:contrastive-guidance} and prior work~\citep{dayan1992feudal,abdolmaleki2018maximum,peng2019advantage}.
Thus, with a positive guidance weight $w>0$ and $T(y_+)>T(y_-)$, the target of contrastive guidance induces advantage maximization, rather than matching a fixed value. 
The contrastive labels $y_+, y_-$ and the guidance weight $w$ control the maximization degree together: a higher $w[T(y_+)-T(y_-)]$ leads to weaker constraint and stronger advantage maximization.

\paragraph{Practical Implementation.} In practice, we can choose a simple exponential family for the conditional variable $Y$, the Gaussian distribution. In other words, we choose $Y \mid s, a \sim \mathcal{N}(A(s, a), \sigma_y^2)$, which corresponds to
$
h(y)=\frac{1}{\sqrt{2\pi}\sigma_y}
\exp(-\frac{y^2}{2\sigma_y^2}) ,
T(y)=\frac{y}{\sigma_y^2},
F(A)=\frac{A^2}{2\sigma_y^2}.
$
For $y_+>y_-$, we automatically have $T(y_+)>T(y_-)$ and we can choose guidance weight 
$w=\frac{\sigma_y^2}{\beta(y_+-y_-)} > 0$ to give the desired tilt in \Cref{eqn:tilted-distribution}.

\subsection{Residual Parameterization for Conditional Policy}
\label{sec:residual-parameterization}
While contrastive guidance can target the KL-regularized optimal policy in theory, it makes an implicit assumption that the conditional policy is advantage-aligned with $y_+$ and $y_-$. Therefore, we focus next on training techniques that can improve advantage alignment.
Recall that we found in \Cref{sec:problems} that \naive advantage conditioned policies often ignore advantage inputs and largely optimize the policy loss based on state and initial noise alone. In other words, the policy is minimizing the loss for the advantage-conditional policy by learning to minimize the unconditional loss.
This motivates an approach that separates out the components that are not conditioned on advantage, and leaves the conditional policy to focus on the advantage input.
In \methodname{}, we propose to parameterize the conditional policy as a residual flow $v_\theta$ to the flow field of an unconditional reference policy $v_{\hat{\pi}}(a\mid s)$.
Specifically, both policies are instantiated as flow-matching networks, where the model learns a velocity field conditioned on a partially noised action $x_t$, along with the noise scale $t$, state $s$, and conditioning variable $y$. The residual parameterization in velocity space is:
\begin{equation}
\label{eqn:residual-policy-flow}
    v_\pi(x_t, t, s, y) = v_{\hat{\pi}}(x_t, t, s) + v_\theta(x_t, t, s, y).
\end{equation}
The training loss for the conditional policy is:
\begin{equation}
\label{eqn:training-loss}
    \mathcal{L}(\theta) = \mathbb{E}_{s, a \sim \mathcal{D}_{\mathrm{aug}}} \big[ || v_{\hat{\pi}}(s, x_t, t) + v_\theta (s, x_t, y, t) - (a - \epsilon)  ||^2 \big]  \quad \text{where} \;\; x_t = (1-t) \epsilon + t a.
\end{equation}
The noise scale $t \sim U[0, 1]$ and $\epsilon \sim \mathcal{N}(0, 1)$ is Gaussian noise.
The unconditional reference flow policy $\hat{\pi}$ is trained to minimize regular flow-matching loss $\mathbb{E}_{s, a \sim D}\left[|| v_{\hat{\pi}}(s, x_t, t) - (x - \epsilon)||^2\right]$.
With this parameterization, the reference policy $\hat{\pi}(a\mid s)$ minimizes the advantage-independent loss, therefore encouraging the residual $v_\theta$ to focus on minimizing $\mathcal{L}(\theta)$ based on conditional input $y$. 

\newcommand{\algcomment}[2][gray]{%
{\color{#1}\hfill$\blacktriangleleft$~\footnotesize\textit{#2}}}

\newcommand{\algcommentdown}[2][gray]{%
{\color{#1}$\blacktriangledown$~\footnotesize\textit{#2}}}

\begin{algorithm}[t]
\caption{\methodname{} (\methodshort{})}
\label{alg:cape}
\centering
\footnotesize
\begin{tabular}{
@{}p{0.6\linewidth}
@{\hspace{0.04\linewidth}}
p{0.35\linewidth}@{}
}
\textbf{Training} & \textbf{Sampling} \\
\midrule

\textbf{Input:} dataset $\mathcal D$, advantage $A(s,a)$,
$K$, advantage label std $\sigma_y$
&
\textbf{Input:} state $s$, trained velocities $v_{\hat\pi},v_\theta$ \\

\textbf{while} not converged \textbf{do}
&
\quad conditional query pair $y_+>y_-$,  \\

\quad $(s,a)\sim\mathcal D$, $\epsilon\sim\mathcal N(0,I)$,
$t\sim\mathcal U(0,1)$
&
\quad guidance weight $w$, steps $T$ \\

\quad Train ref. policy $\hat{\pi}$ with $\displaystyle
\mathcal L_{\mathrm{BC}}
=
\left\|v_{\hat\pi}-(a-\epsilon)\right\|_2^2
$
&
$x_0\sim\mathcal N(0,I)$ \\

\quad $a_0\gets a$; \;\;
$a_i\sim\hat\pi(\cdot\mid s)$, \;\; $i=1,\ldots,K$
\algcomment[ourblue]{action imagination}
&
\textbf{for} $n=0,\ldots,T-1$ \textbf{do} \\

\quad $A_i\gets A(s,a_i)$, \;\; $i=0,\ldots,K$
&
\quad $t\gets n/T$ \\

\quad $\tilde A_i\gets
(A_i-\operatorname{mean}(A_{0:K}))/
\sqrt{\operatorname{var}(A_{0:K})+\delta}$
\algcomment{adv. norm.}
&
\quad \algcommentdown[ourblue]{contrastive guidance} \\

\quad $y_i\sim\mathcal N(\tilde A_i,\sigma_y^2)$,
$i=0,\ldots,K$
\algcomment{Gaussian conditional var.}
&
\quad $v\gets v_{\hat\pi}+w\bigl[v_\theta(y_+)-v_\theta(y_-)\bigr]$ \\

\quad $\displaystyle
\Delta\theta \leftarrow \nabla_\theta
\sum_{i=0}^{K}
\left\|v_{\hat\pi}
+v_\theta(y_i)-(a_i-\epsilon)\right\|_2^2
$
\algcomment[ourblue]{residual param.}
&
\quad $x_{n+1}\gets x_n+\frac{1}{T}v$ \\

\textbf{end while}
&
\textbf{end for} \\

\textbf{return} $v_{\hat\pi},v_\theta$
& 
\textbf{return} $a=x_T$ \\

\bottomrule
\end{tabular}

\vspace{2pt}
\begin{minipage}{\linewidth}
\footnotesize
Velocity-field arguments are abbreviated above:
$v_{\hat\pi}=v_{\hat\pi}(x_t,t,s)$ and
$v_\theta(y)=v_\theta(x_t,t,s,y)$
\end{minipage}
\end{algorithm}

\subsection{Action Imagination for Counterfactual Supervision}
\label{sec:action-imagination}

Beyond optimization dynamics, learning an advantage-conditioned policy presents a more fundamental challenge: the dataset typically contains only one action for any given state, especially in continuous state spaces. As a result, the policy receives little direct supervision on how actions should change as requested advantages vary. 
We address this lack of counterfactual supervision with \textit{action imagination}: for each state-action pair $(s, a_d)$ in the dataset, we augment the training distribution for the conditional policy with $K$ additional actions on state $s$ sampled from some proposal distribution: 
$$
\mathcal D_{\mathrm{aug}}
=
\bigcup_{(s,a_d)\in\mathcal D}
\left\{
\bigl(s,a_i,A(s,a_i)\bigr)
\right\}_{i=0}^{K},
\qquad
a_0=a_d,\quad
a_i\sim\pi_p(\cdot\mid s)
\;\text{for }i=1,\ldots,K.
$$
One convenient solution is to set the proposal distribution as the reference unconditional policy: $\pi_p = \hat{\pi}$.
As we will show in \Cref{sec:experiments}, training with residual parameterization and action imagination drastically improves advantage alignment and policy performance, especially on long horizon tasks.

\subsection{Implementation Details}
\label{sec:implementation details}

\methodname{} is an RL policy extraction algorithm based on advantage-conditioned training and contrastive guidance during sampling. It is agnostic to how the advantage estimates are derived. For simplicity, in this paper, we train value functions $Q(s, a)$ and $V(s)$ using IQL~\citep{kostrikov2021offline}, and calculate advantage as $A(s, a) = Q(s, a) - V(s)$. 
\Cref{alg:cape} shows a summary of the training and sampling procedure of \methodname.
During training, we standardize advantages separately for each state
using $\tilde{A}(s,a_i) = (A(s,a_i)-\mu_s)/\sqrt{\sigma_s^2+\delta}$,
where $\mu_s = \frac{1}{K+1}\sum_{j=0}^{K} A(s,a_j)$ and
$\sigma_s^2 = \frac{1}{K+1}\sum_{j=0}^{K}(A(s,a_j)-\mu_s)^2$,
with dataset action $a_0=a_d$, imagined actions $a_{i=1,...,K}$, and $\delta>0$ ensuring numerical stability.
We find that this state-level normalization is more effective than batch-level and dataset-level normalization (see \Cref{apdx:additional-ablations}).
Then, we condition the policy on $\tilde{A}$ using AdaLN~\citep{xu2019understanding}, which adaptively scales and shifts hidden representations at each layer.

\section{Experiments}
\label{sec:experiments}

We conduct experiments to analyze the empirical effectiveness of \methodname{} on a range of long-horizon, sparse reward manipulation tasks.
Concretely, we study the following research questions:
\textbf{(1)} How well does \methodname{} perform compared to prior RL methods on datasets with varying degrees of optimality?
\textbf{(2)} Do residual parameterization and action imagination improve advantage alignment and policy performance?
\textbf{(3)} Does contrastive guidance improve policy performance compared to CFG or no guidance?
Additionally, we conduct further ablation experiments in \Cref{apdx:additional-ablations}. All detailed results are documented in \Cref{apdx:detailed-results}.

\paragraph{Experimental Setup.}
We evaluate all methods in the offline RL setting with $20$ tasks from OGBench~\citep{park2024ogbench} and $6$ tasks from OCBench~\citep{park2026behavioral}.
OGBench offers a suite of long-horizon challenging manipulation tasks, and OCBench is a recently proposed extension of OGBench that provides a more realistic action distribution meant to emulate the statistical properties of real-world human demonstrations.
In particular, the two benchmarks differ substantially in their offline dataset composition. 
OGBench is a standard offline RL benchmark with \textit{play-style} datasets generated by scripted behaviors pursuing random goals, often yielding highly suboptimal trajectories for the task objective.
OCBench offers \textit{demonstration-based} datasets, consisting of trajectories generated by near-expert policies with high success rates on the task objective. 
Compared to play-style datasets, these demonstrations cover narrower states and predominantly near-optimal actions, making it more challenging for offline RL because extrapolation errors are harder to avoid.
However, such data more closely resembles real-world robotics datasets, and OCBench’s design to emulate human demonstration statistics makes it particularly relevant.
We use \texttt{cube-single-T\{1-5\}}, \texttt{cube-double-T\{1-5\}}, \texttt{scene-T\{1-5\}}, \texttt{puzzle-3x3-T\{1-5\}} from OGBench and lite tasks \texttt{block-single-T1}, \texttt{block-double-T\{1-2\}}, \texttt{chamber-easy-T\{1-3\}} from OCBench, all with sparse rewards.
In addition, we follow the action chunking setting~\citep{li2025reinforcement}, where the higher-dimensional actions make the action distribution more complex and therefore more challenging.

\definecolor{adguide}{HTML}{73A5C6}
\definecolor{adguide}{HTML}{4F83A6}  %

\definecolor{edpcolor}{HTML}{D2D2D2}
\definecolor{fqlcolor}{HTML}{4D4D4D}  %
\definecolor{qgfcolor}{HTML}{DCEFE5}
\definecolor{cfgrlcolor}{HTML}{C1E1D1}
\definecolor{fawrcolor}{HTML}{A6D3BC}
\definecolor{rcpacolor}{HTML}{86C3A4}
\definecolor{rcprcolor}{HTML}{62AD8A}
\definecolor{crrcolor}{HTML}{3C9673}

\paragraph{Baselines.}
We compare \textcolor{adguide}{\textbf{\methodname{}}} against a representative set of strong flow-matching RL baselines, grouped into three categories:
\textcolor{black}{\textbf{(1) actor critic methods}} train a parameterized actor to maximize the critic $Q(s,a)$ subject to behavioral constraints. \textbf{FQL}~\citep{park2025flow} distills a multi-step flow policy into a single-step policy to avoid backpropagation through time, while \textbf{EDP}~\citep{kang2023efficient} approximates the multi-step integration with a single large Euler step.
\textcolor{crrcolor}{\textbf{(2) Supervised-learning-based methods}} formulate policy optimization as a conditional, filtered, or weighted supervised learning problem. \textbf{CRR}~\citep{wang2020critic} trains a behavior cloning policy on the subset of data with positive advantage; \textbf{FAWR} is an extension of AWR~\citep{peng2019advantage} to flow policies by weighting the flow-matching loss with exponentiated advantages. \textbf{RCP-R} and \textbf{RCP-A}~\citep{kumar2019reward} train a policy conditioned on the return-to-go and advantage estimate, respectively.
\textcolor{rcprcolor}{\textbf{(3) Diffusion guidance methods}} steer the denoising process toward high-value actions at inference time. \textbf{QGF}~\citep{zhou2026test} guides a behavior cloning policy using an approximation of the critic gradient $\nabla_a Q(s,a)$ at partially noised actions. \textbf{CFGRL}~\citep{frans2025diffusion} applies CFG between a policy trained on the full dataset and a policy trained on the positive-advantage subset.
\Cref{apdx:baselines} provides a more detailed description of each method. To ensure fair comparison among all the policy extraction methods, we use the same value function for all. For simplicity, use the IQL~\citep{kostrikov2021offline} $Q$ and $V$ functions, and use $A(s,a) = Q(s, a) - V(s)$.

\subsection{How Does \methodname{} Compare with Prior Methods?}
\label{sec:results-comparison-with-prior}

\begin{figure}[t]
    \centering
    \includegraphics[width=\linewidth]{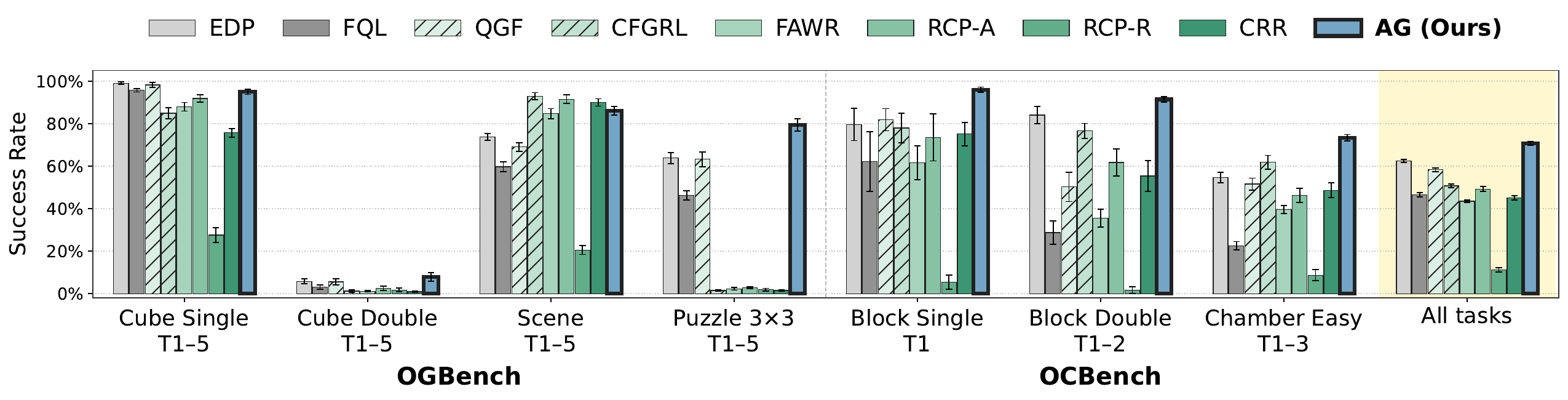}
    \caption{\textbf{Offline RL performance} on OGBench (20 tasks) and OCBench (6 tasks) with $10$ seeds: \methodname{} (\textcolor{adguide}{blue}) outperforms all previous methods, including strong actor-critic methods (\textcolor{fqlcolor}{grey}), supervised-learning-based methods (\textcolor{crrcolor}{green}), and diffusion guidance methods (\textcolor{rcprcolor}{shaded green}).}
    \label{fig:main-results}
    \vspace{-1.5em}
\end{figure}

While we expected the main benefit of \methodname{} to be its simplicity, in practice we found that it actually outperforms much more sophisticated RL methods. 
\Cref{fig:main-results} compares the performance of \methodname{} against baselines. Overall, \methodname{} significantly outperforms previous methods on both play-style datasets from OGBench and demonstration-based datasets from OCBench.
In particular, we find that previous supervised-learning-based methods perform poorly compared to strong actor critic methods such as EDP. The performance gap is especially large on challenging long horizon tasks such as \texttt{puzzle-3x3} and \texttt{block-double}.
In comparison, \methodname{} significantly outperforms these baselines, including strong actor critic methods such as EDP and FQL.
\methodname{} also significantly outperforms previous diffusion guidance methods: compared to CFGRL, \methodname{} can utilize continuous advantage signals for denser supervision, and can target the KL-regularized optimal policy with contrastive guidance; compared to QGF, \methodname{} doesn't rely on explicit $Q$-gradients in high-dimensional action spaces, which can be unstable or inaccurate.

\paragraph{Best-of-N sampling.}
\begin{wrapfigure}{r}{0.4\linewidth}
    \centering
    \vspace{-2em}
    \includegraphics[width=\linewidth]{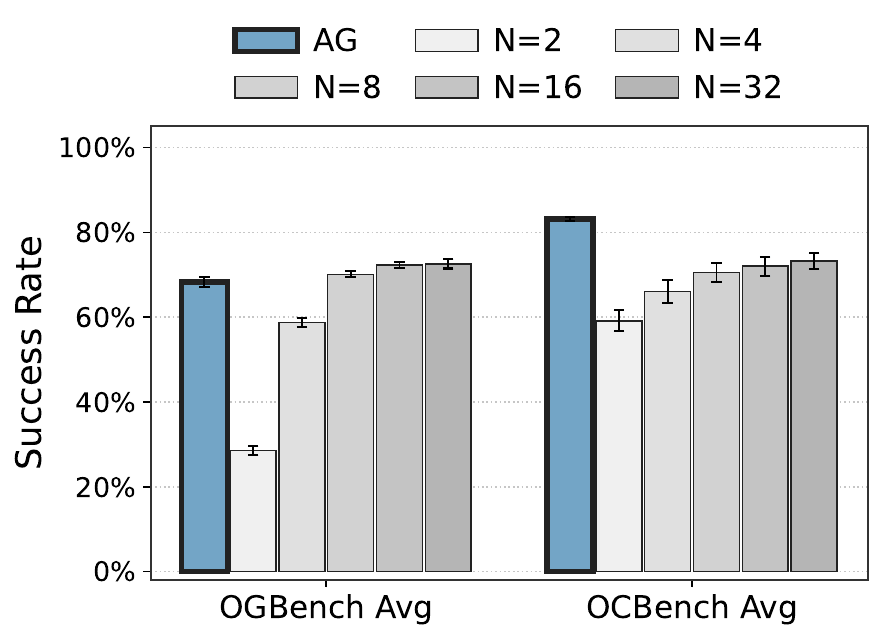}
    \caption{\textbf{Comparison with best-of-$N$ sampling}. \methodname{} outperforms BFN at small $N$ on OGBench and consistently on OCBench, despite using much less test-time compute.
    }
    \vspace{-0.9em}
    \label{fig:results-vs-bfn}
\end{wrapfigure}
Prior work has shown that when additional test-time compute is available, best-of-N (BFN) sampling is an expensive but effective method to improve flow policies~\citep{zhou2026test,hansen2023idql}. BFN samples multiple actions from a reference BC policy and chooses the action with the highest $Q$-value. However, even with a small number of samples such as $N=4$, BFN is much more expensive due to the sampling process (see \Cref{fig:bfn-compute-cost}). Regardless, we compare against this strong baseline in \Cref{fig:results-vs-bfn} with different values of $N$.
On OGBench, \methodname{} significantly outperforms BFN at low values of $N$ despite using less test-time compute, but eventually slightly underperforms BFN with large enough $N$, which also outperforms all prior methods in \Cref{fig:main-results}.
On OCBench, however, \methodname{} is significantly better even with large $N$.
This suggests that while BFN can be quite effective on play-style datasets, it is less competitive on more realistic demonstration-based datasets, even with sufficiently large $N$.

\begin{figure}[b]
    \vspace{-0.5em}
    \centering
    \begin{minipage}{0.55\linewidth}
        \centering
        \includegraphics[width=\linewidth]{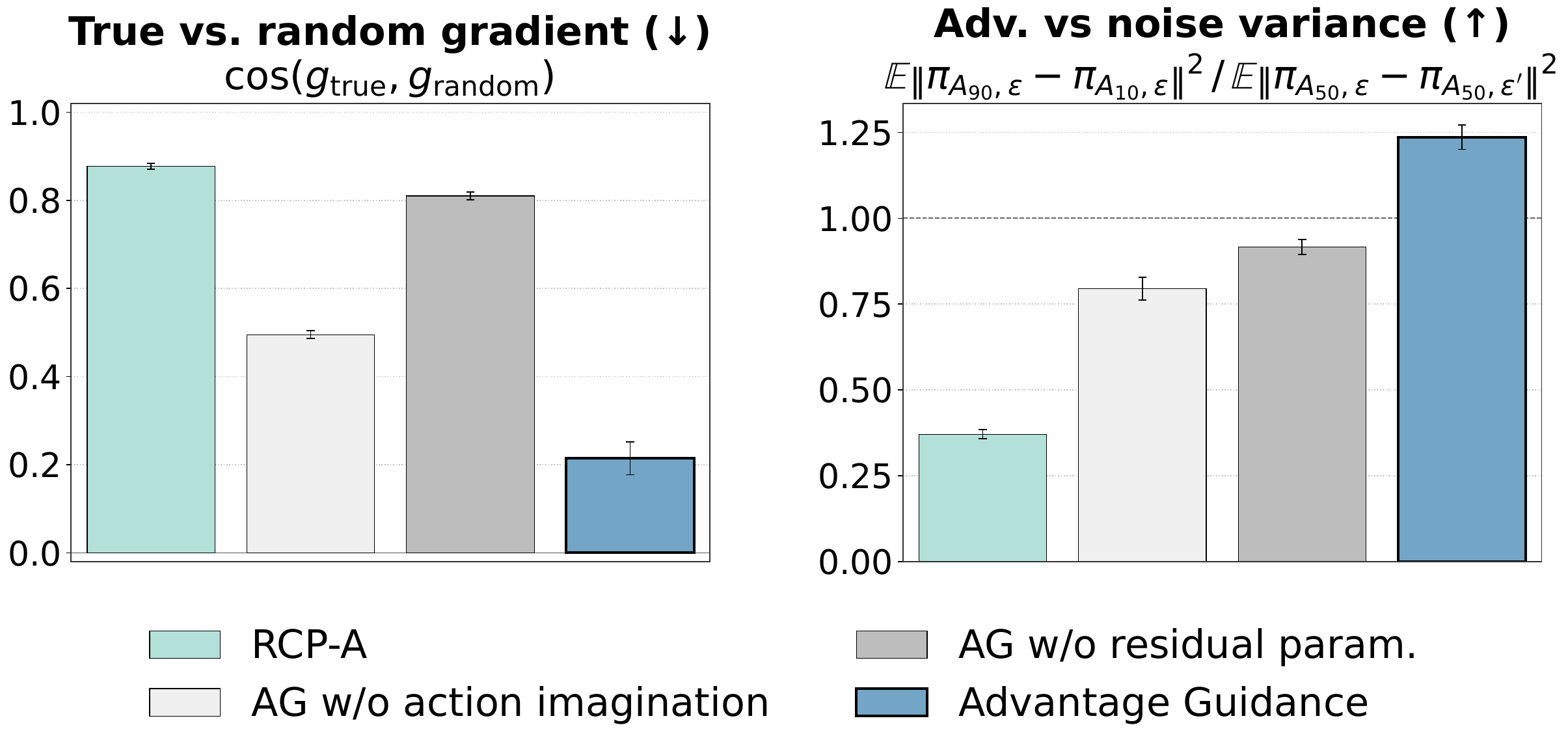}
        \caption{\textbf{Effect of residual parameterization and action imagination on advantage alignment}: both improve gradient sensitivity and output variance on $20$ OGBench tasks. 
        }
        \label{fig:ablate-residual-aug-metrics}
    \end{minipage}
    \hfill
    \begin{minipage}{0.4\linewidth}
        \centering
        \includegraphics[width=\linewidth]{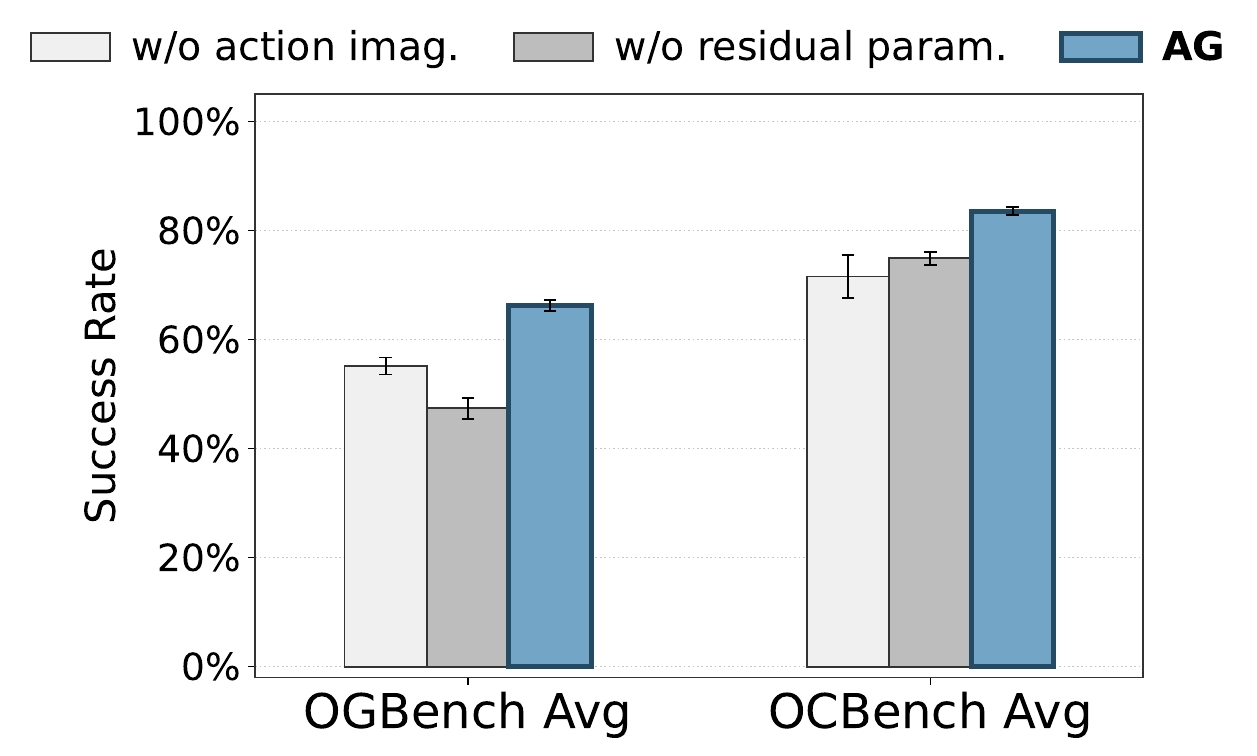}
        \vspace{-0.5cm}
        \caption{\textbf{Ablation of residual parameterization and action imagination training}: both improve policy performance across OGBench and OCBench.
        }
        \label{fig:ablate-residual-aug-performance}
    \end{minipage}
    \vspace{-0.4cm}
\end{figure}

\subsection{Does AG Training Improve Advantage Alignment and Policy Performance?}
\label{sec:results-ablate-residual-param-action-imagine}

We analyze whether the residual parameterization and the action imagination training scheme introduced in \Cref{sec:method} improve advantage alignment and policy performance.
We first plot in \Cref{fig:ablate-residual-aug-metrics} the two metrics introduced in \Cref{sec:problems} that measure how much a conditional policy responds to advantage conditioning: the gradient similarity between true and random gradients and the advantage-noise variance ratio.
\Cref{fig:ablate-residual-aug-metrics} shows that both the residual parameterization and action imagination improve advantage alignment, and combining both does even better.
\Cref{fig:ablate-residual-aug-performance} shows that such increased alignment translates to better policy performance on both OGBench and OCBench.
We also ablate the effect of different numbers of imagined actions in \Cref{fig:ablation-different-k} in \Cref{apdx:additional-ablations}, showing that increasing $K$ can increase performance but will plateau after $K=7$.

\subsection{Does contrastive guidance improve performance over CFG or no guidance?}
\label{sec:results-contrastive-guidance}

\begin{wrapfigure}{r}{0.4\textwidth}
    \centering
    \includegraphics[width=\linewidth]{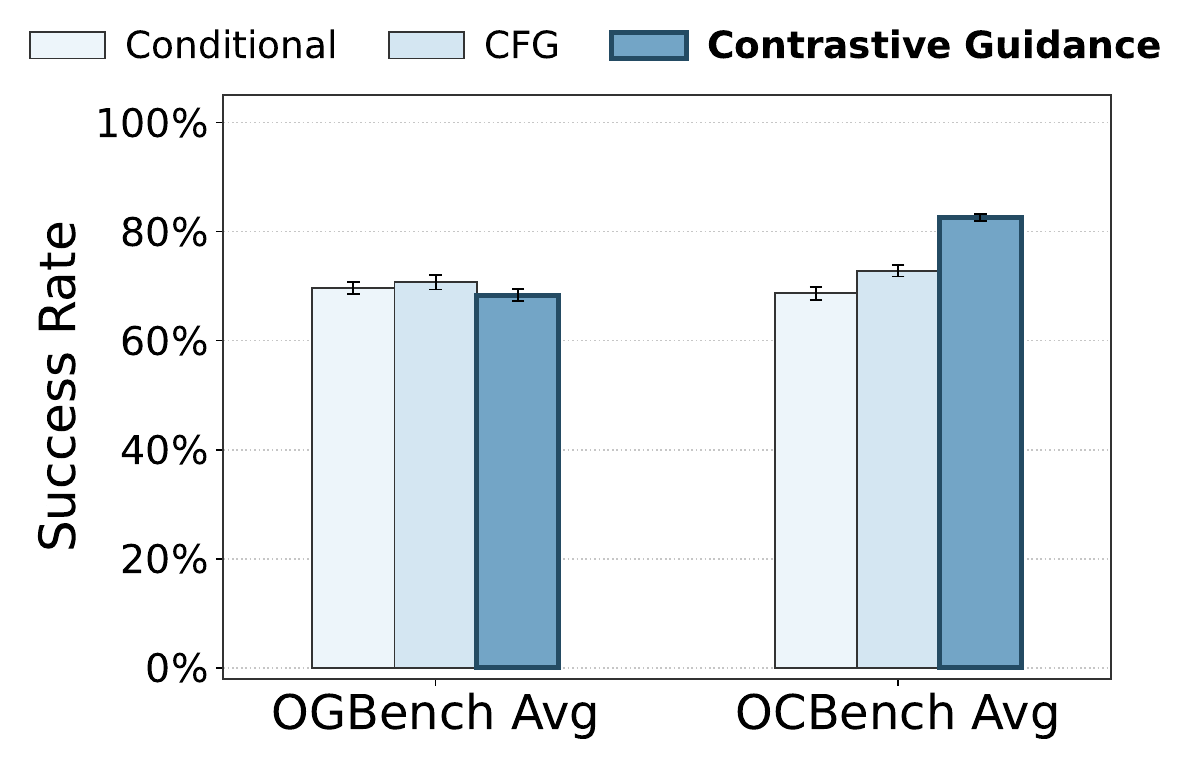}
    \caption{\textbf{Ablation of sampling method for \methodname{}.} Contrastive guidance significantly outperforms CFG and no guidance on $6$ OCBench tasks.}
    \label{fig:no-cfg}
\end{wrapfigure}
We showed in \Cref{sec:guidance-theory} that contrastive guidance can target the KL-regularized optimal policy and always favor high-advantage actions. Here, we empirically compare contrastive guidance with other sampling methods.
Specifically, we compare performance of the same \methodshort{}-trained policy when sampled with (1) contrastive guidance, (2) CFG, and (3) just querying the conditional policy.
For contrastive guidance, we use fixed $(y_+, y_-)$ per benchmark, while we tune the best conditional values per-domain per benchmark for the latter two to make them as competitive as possible.
\Cref{fig:no-cfg} shows that contrastive guidance performs similarly to CFG and pure conditional policy on OGBench, but significantly improves performance on OCBench.
It is not entirely surprising that all three methods can perform well: while CFG and pure conditional policy can only match a target advantage value instead of maximizing advantage, conditioning an advantage-aligned policy on a high-enough advantage should have a similar maximizing effect.
However, as observed on OCBench, conditional policies and standard CFG may perform poorly when high query advantages lie outside the training distribution, a common issue in demonstration-based datasets. In this regime, contrastive guidance is crucial for advantage maximization. See \Cref{apdx:conditional-sampling} for more discussions.
Furthermore, all methods only perform well if the conditional policy is advantage aligned (such as \methodshort{} here); using an unaligned policy such as RCP-A leads to substantially worse performance as \Cref{fig:main-results} shows.

\section{Discussions}
\label{sec:discussions}

We demonstrated in this paper that \methodname{} can outperform more complex RL methods with a simple supervised learning objective with advantage conditioning and contrastive diffusion guidance during inference. Our work serves as a step towards simple, stable, yet performant off-policy RL algorithms. There are several promising directions for future research. First, our implementation uses a straightforward advantage estimate. While we find this to be empirically effective, a natural next step is to explore whether more intricate advantage estimates improve policy performance, and how policy extraction can be robust in case of noisy advantage estimates. Second, our approach uses contrastive guidance to maximize advantage during inference time, but too large a guidance weight usually lead to a decline in performance. Developing techniques to stably utilize larger guidance weights can further improve the performance of \methodname{}. Finally, it would be interesting to explore \methodname{} in (offline-to-)online RL settings.

\section{Acknowledgments}
This research was partly supported by ONR N00014-25-1-2060 and AFOSR FA9550-22-1-0273. 
We would like to thank Andy Peng and Charles Xu for helpful discussions and early ideation, and Seohong Park for feedback on early drafts of the paper.
This research used the Savio computational cluster resource provided by the Berkeley Research Computing program at UC Berkeley.

\bibliography{ref}
\bibliographystyle{plainnat}

\appendix
\section{Appendix}

\subsection{Contrastive Guidance Theoretical Guarantee.}
\label{apdx:exponential-family-guidance}

\begin{theorem}[Contrastive guidance]
\label{thm:formal-contrastive-guidance}
Let $\hat{\pi}$ be a reference policy and $\pi(a\mid s, y)$ be the conditional policy, where the conditional random variable $Y$ has likelihood
\begin{align}
p_Y(y\mid A)
=
h(y)\exp\left(T(y)A-F(A)\right),
\qquad
F(A)=\log\int h(y)e^{T(y)A}\,dy,
\end{align}
where $h(y)\geq 0$.
For fixed $w,y_+,y_-$ satisfying
$0<h(y_\pm)<\infty$, $|T(y_\pm)|<\infty$, the policy
$\pi_{\mathrm{KL}}^*(a\mid s)
\propto\hat{\pi}(a\mid s)e^{A(s,a)/\beta}$
satisfies
\begin{align}
    \nabla_a\log\pi_{\mathrm{KL}}^*(a\mid s)
    &=
    \nabla_a\log\hat{\pi}(a\mid s)
    +w\left[
    \nabla_a\log\pi(a\mid s,y_+)
    -\nabla_a\log\pi(a\mid s,y_-)
    \right].
\end{align}
\end{theorem}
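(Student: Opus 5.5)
The plan is to compute both sides in terms of the reference score and the advantage gradient, using Bayes' rule exactly as in \Cref{sec:theoretical-issues-with-cfg}. The formal statement omits the relation between $w$ and $\beta$; I will impose the condition from \Cref{thm:contrastive-guidance}, namely $1/\beta = w[T(y_+)-T(y_-)]$, and show it is exactly what makes the two sides agree. I will also assume the conditional policy is the Bayes posterior under the likelihood model:
\[
\pi(a\mid s,y)=\frac{p_Y(y\mid A(s,a))\,\hat\pi(a\mid s)}{p(y\mid s)},
\qquad
p(y\mid s)=\int p_Y(y\mid A(s,a'))\,\hat\pi(a'\mid s)\,da' .
\]
Throughout, $A(s,\cdot)$ and $\hat\pi(\cdot\mid s)$ are differentiable in $a$, and $\hat\pi>0$ wherever the scores are taken.

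\textbf{Step 1: the target score.} For the left-hand side, the normalizer of $\pi_{\mathrm{KL}}^*$ depends only on $s$. Hence
\[
\nabla_a\log\pi_{\mathrm{KL}}^*(a\mid s)=\nabla_a\log\hat\pi(a\mid s)+\tfrac{1}{\beta}\nabla_a A(s,a).
\]

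\textbf{Step 2: the conditional scores.} For each $y\in\{y_+,y_-\}$, take logs of the Bayes formula:
\[
\log\pi(a\mid s,y)=\log h(y)+T(y)A(s,a)-F(A(s,a))+\log\hat\pi(a\mid s)-\log p(y\mid s).
\]
The hypotheses $0<h(y_\pm)<\infty$ and $|T(y_\pm)|<\infty$ make $\log h(y_\pm)$ and $T(y_\pm)$ finite constants, so this is well defined. They also give $p(y_\pm\mid s)\in(0,\infty)$, which needs $F$ finite on the range of $A$; I would note that as a standing assumption. The terms $\log h(y)$ and $\log p(y\mid s)$ are independent of $a$ and drop out under $\nabla_a$, giving
\[
\nabla_a\log\pi(a\mid s,y)=\nabla_a\log\hat\pi(a\mid s)+\bigl(T(y)-F'(A(s,a))\bigr)\nabla_a A(s,a).
\]

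\textbf{Step 3: the contrastive difference.} Subtracting the $y_-$ score from the $y_+$ score is the key point. Both the reference score $\nabla_a\log\hat\pi$ and the log-partition term $F'(A)\nabla_aA$ are the same for $y_+$ and $y_-$, so they cancel. This cancellation of $F$ is exactly what distinguishes contrastive guidance from CFG in \eqref{eqn:cfg-induced-solution}, where the quadratic $F$ survives and produces the matching penalty. What remains is
\[
\bigl[T(y_+)-T(y_-)\bigr]\nabla_a A(s,a).
\]
Multiplying by $w$ and adding $\nabla_a\log\hat\pi$, the right-hand side equals $\nabla_a\log\hat\pi+w[T(y_+)-T(y_-)]\nabla_aA$. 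Under the condition $w[T(y_+)-T(y_-)]=1/\beta$, this is exactly the expression from Step 1.

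\textbf{Main obstacle.} The algebra is routine; the real work is making the hidden assumptions explicit. These are the $w$–$\beta$ coupling, the Bayes-posterior form of $\pi(a\mid s,y)$, finiteness of $p(y_\pm\mid s)$, and differentiability of $F$ on the range of $A$.

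A natural extension is to noised actions $x_t$ for diffusion sampling. There the argument only holds for the clean ($t=1$) distribution, since the noised likelihood is no longer of exponential-family form in $A$. I would state the theorem at the level of clean action scores, as written, and remark on this limitation rather than attempt the noised case.
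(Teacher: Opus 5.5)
Your proposal is correct and follows essentially the same route as the paper's proof. Both use Bayes' rule for $\pi(a\mid s,y)$, cancel the log-partition term between $y_+$ and $y_-$, and invoke the coupling $\beta^{-1}=w[T(y_+)-T(y_-)]$; the paper's proof also uses this coupling even though the formal statement omits it, as you noticed. The only cosmetic difference is that the paper cancels $F(A(s,a))$ inside the log-ratio $\log\frac{\pi(a\mid s,y_+)}{\pi(a\mid s,y_-)}$ before differentiating, so it never needs $F'$, whereas you differentiate each conditional score separately and cancel $F'(A)\nabla_a A$ afterward.
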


\begin{proof}
By Bayes' rule, we have
\begin{align}
    \pi(a\mid s,y)
    =
    \frac{\hat{\pi}(a\mid s)p_Y(y\mid A(s,a))}
    {p_Y(y\mid s)},
\end{align}
where
$p_Y(y\mid s)
=\int_{\mathcal{A}}\hat{\pi}(a'\mid s)p_Y(y\mid A(s,a'))\,da'$
is the marginal distribution of $Y$.
Taking the ratio of the two conditional policies gives
\begin{align}
    \log\frac{\pi(a\mid s,y_+)}{\pi(a\mid s,y_-)}
    &=
    \log\frac{p_Y(y_+\mid A(s,a))}
    {p_Y(y_-\mid A(s,a))}
    + \log\frac{p_Y(y_-\mid s)}{p_Y(y_+\mid s)}
    \\
    &=
    \left[T(y_+)-T(y_-)\right]A(s,a)
    + \log\frac{h(y_+)}{h(y_-)}
    + \log\frac{p_Y(y_-\mid s)}{p_Y(y_+\mid s)},
\end{align}
where the log-normalizer $F(A(s,a))$ cancels out.
Since the last two terms are independent of $a$, differentiating yields
\begin{align}
    \nabla_a\log\pi(a\mid s,y_+)
    - \nabla_a\log\pi(a\mid s,y_-)
    =
    \left[T(y_+)-T(y_-)\right]\nabla_a A(s,a).
\end{align}
Using $\beta^{-1}=w[T(y_+)-T(y_-)]$, we therefore obtain
\begin{align}
    &\nabla_a\log\hat{\pi}(a\mid s)
    + w\left[
    \nabla_a\log\pi(a\mid s,y_+)
    - \nabla_a\log\pi(a\mid s,y_-)
    \right]
    \\
    &\qquad=
    \nabla_a\log\hat{\pi}(a\mid s)
    + \frac{1}{\beta}\nabla_a A(s,a)
    \\
    &\qquad=
    \nabla_a\log
    \left(
    \frac{\hat{\pi}(a\mid s)e^{A(s,a)/\beta}}{Z_\beta(s)}
    \right)
    \\
    &\qquad=
    \nabla_a\log\pi_{\mathrm{KL}}^*(a\mid s),
\end{align}
\end{proof}

We note that the result above characterizes the desired policy through its clean-action score. For sampling with contrastive guidance, we extend it to intermediate denoising steps as a practical approximation, following standard classifier free guidance~\citep{ho2022classifier}.

\subsection{Additional \methodname{} Ablation Experiments}
\label{apdx:additional-ablations}

\paragraph{Action imagination.}
\Cref{fig:ablation-different-k} shows the ablation of using different numbers of imagined actions for \methodname{}. For each, the conditional policy is trained on $K+1$ actions per state, with $K$ imagined actions and $1$ dataset action. As described in \Cref{sec:implementation details}, the advantages are also normalized with the mean and variance of these $K+1$ actions. \Cref{fig:ablation-different-k} shows that having a very small $K$ such as $K=3$ hurts performance, and having more imagined actions helps, though performance plateaus after roughly $K=7$.

\begin{figure}[h]
    \centering
    \includegraphics[width=0.8\linewidth]{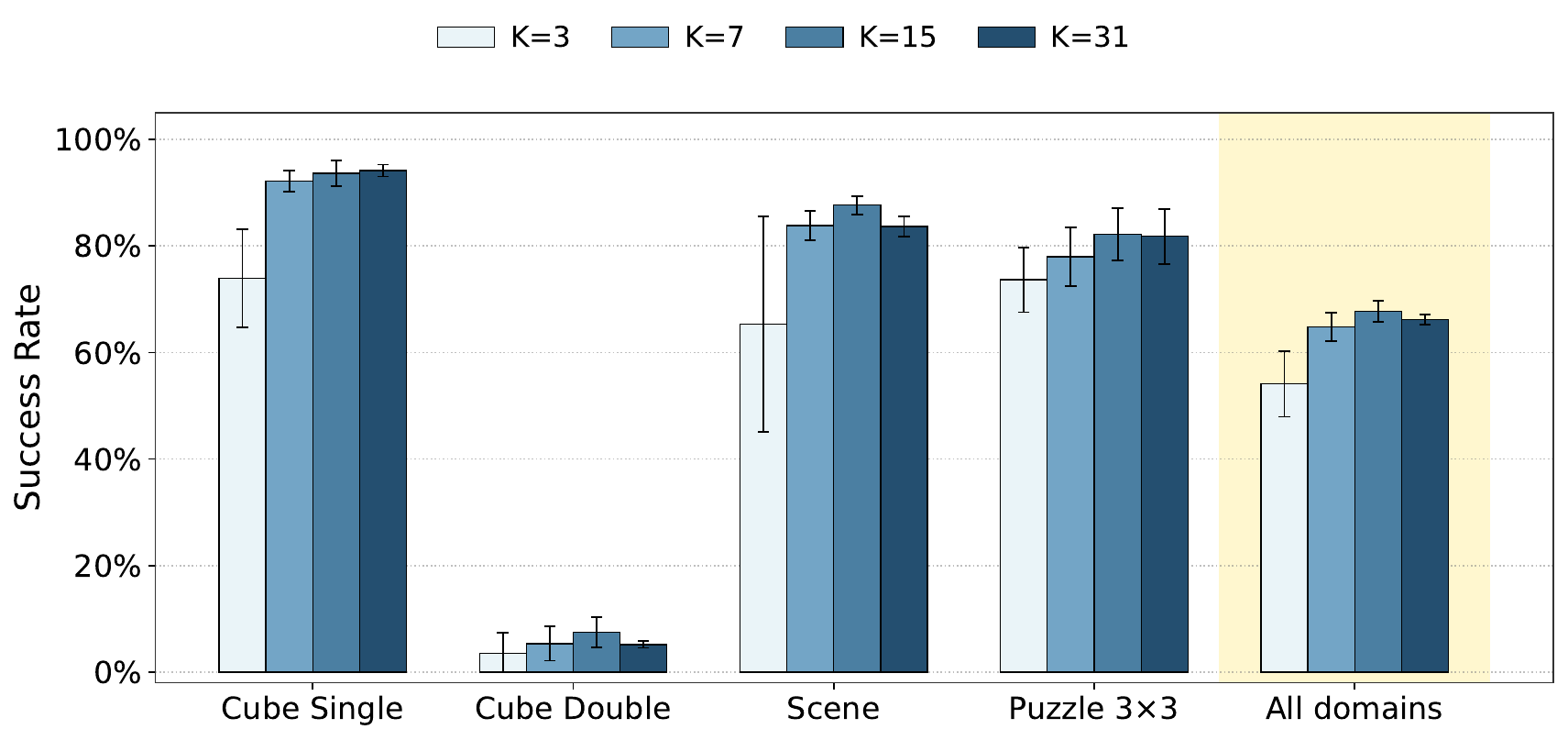}
    \caption{Ablating the number of imagined actions, $K$, used for training the conditional policy in \methodname{}. 
    A small $K$ underperforms larger values, but performance quickly plateaus.
    Averaged over $5$ tasks per domain and $4$ seeds.
    }
    \label{fig:ablation-different-k}
\end{figure}

\paragraph{Advantage Normalization.}
\Cref{fig:ablation-different-normalization} shows the ablation of using different methods of advantage normalization. We consider three different types of normalization methods that normalize with the mean and variance with $\tilde{A}(s,a_i) = (A(s,a_i)-\mu)/\sqrt{\sigma^2+\delta}$, and each method calculates the mean and variance statistics differently:
\begin{enumerate}
    \item Per-state norm (Ours): as described in \Cref{sec:implementation details}, we calculate mean and variance per-state using $K$ imagined action together with $1$ dataset action on the same state:
    \[
\mu_s = \frac{1}{K+1}\sum_{j=0}^{K} A(s,a_j),
\qquad
\sigma_s^2 = \frac{1}{K+1}\sum_{j=0}^{K}\left(A(s,a_j)-\mu_s\right)^2,
\]
where $a_0=a_d$ is the dataset action and $a_1,\ldots,a_K$ are imagined actions.
    \item Per-batch norm: we calculate the mean and variance across the current minibatch of advantages \[
\mu_{\mathcal B} = \frac{1}{B}\sum_{i=1}^{B} A(s_i,a_i),
\qquad
\sigma_{\mathcal B}^2 = \frac{1}{B}\sum_{i=1}^{B}
\left(A(s_i,a_i)-\mu_{\mathcal B}\right)^2.
\]

    \item Dataset-level norm: we calculate the mean and variance across the entire dataset: \[
\mu_{\mathcal D} = \frac{1}{N}\sum_{i=1}^{N} A(s_i,a_i),
\qquad
\sigma_{\mathcal D}^2 = \frac{1}{N}\sum_{i=1}^{N}
\left(A(s_i,a_i)-\mu_{\mathcal D}\right)^2.
\]
\end{enumerate}
To only ablate the effect of normalization, the different normalization methods all train the conditional policy with action augmentation at $K=31$ and only differ in how the advantages are normalized.
\Cref{fig:ablation-different-normalization} shows that our per-state normalization scheme offers a big advantage over other normalization schemes.

\begin{figure}[h]
    \centering
    \includegraphics[width=0.8\linewidth]{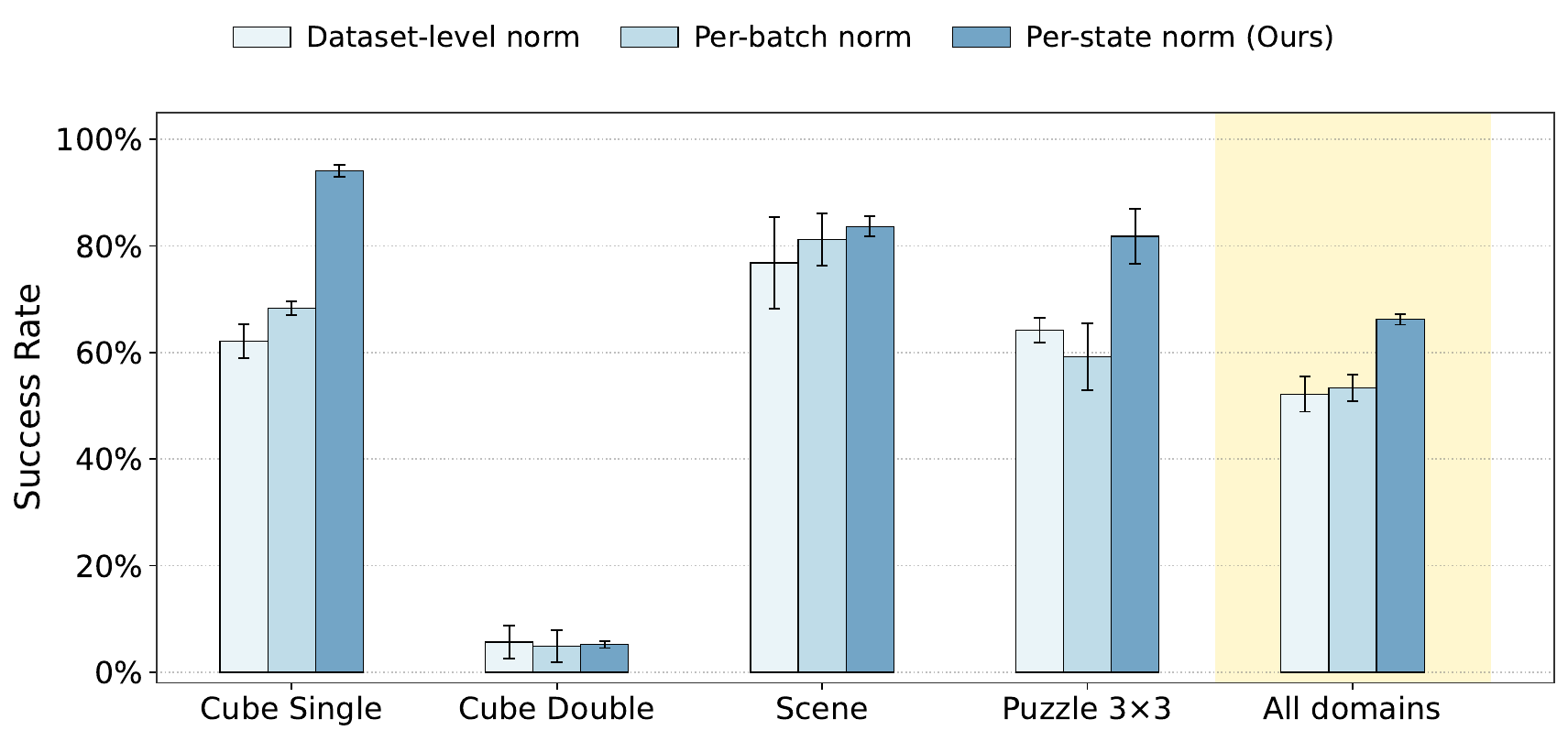}
    \caption{
    Ablating different advantage normalization schemes: our per-state advantage normalization outperforms batch-level and dataset-level normalization. Averaged over $5$ tasks per domain and $4$ seeds.}
    \label{fig:ablation-different-normalization}
\end{figure}

\paragraph{Contrastive guidance conditions and guidance weight.}
Additionally, we measure how the contrastive conditions $(y_+, y_-)$ pair and the guidance weight $w$ affect policy performance. \Cref{fig:dense-condition-pair} shows the performance with $6$ different contrastive pairs at each guidance weight and at the best guidance weight. With the best guidance weight $w$, different contrastive conditions perform similarly. This aligns with our derivation in \Cref{sec:guidance-theory}, which shows that the optimality of the sampled action is determined by the product $w(y_+ - y_-)$, and so in practice it's sufficient to tune only one.

\begin{figure}[h]
    \centering
    \includegraphics[width=0.8\linewidth]{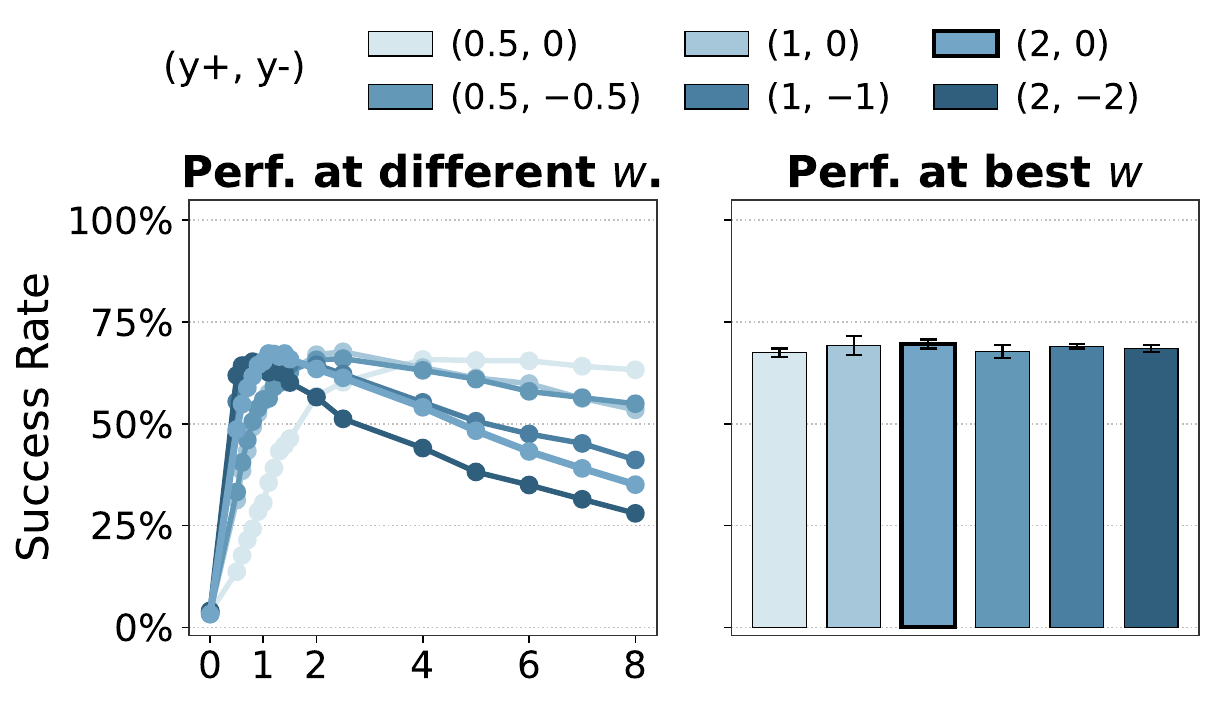}
    \caption{\textbf{Sensitivity to contrastive conditions and guidance weight:} each contrastive condition performs similarly given the best guidance weight for that condition. }
    \label{fig:dense-condition-pair}
\end{figure}

\paragraph{Advantage label standard deviation.}
\Cref{fig:ablation-different-label-noise} ablates different values of standard deviation we use to sample the advantage label $Y\mid s, a \sim N(A(s,a), \sigma_y^2)$. 
To plot the performance, we use contrastive pair $(y_+, y_-) = (2, 0)$ for all variations, and pick the best guidance weight for each variation on each domain according to our hyperparameter selection scheme in \Cref{apdx:hyperparameter}.
The results show that different values of $\sigma_y^2$ result in roughly similar levels of performance.
This is perhaps not so surprising given that we showed in \Cref{sec:guidance-theory} the label standard deviation, guidance weight, and contrastive pair together determine the advantage tilt: $\beta = \frac{\sigma_y^2}{w (y_+ - y_-)}$. 
As a result, using different $\sigma_y$ does not have a significant effect on performance as long as the guidance weight $w$ is changed accordingly as well. This is similar to the finding in \Cref{fig:dense-condition-pair}.
We also note that while our derivation in \Cref{sec:guidance-theory} require strictly $\sigma_y > 0$, $\sigma_y=0$ works well in practice.

\begin{figure}[h]
    \centering
    \includegraphics[width=0.8\linewidth]{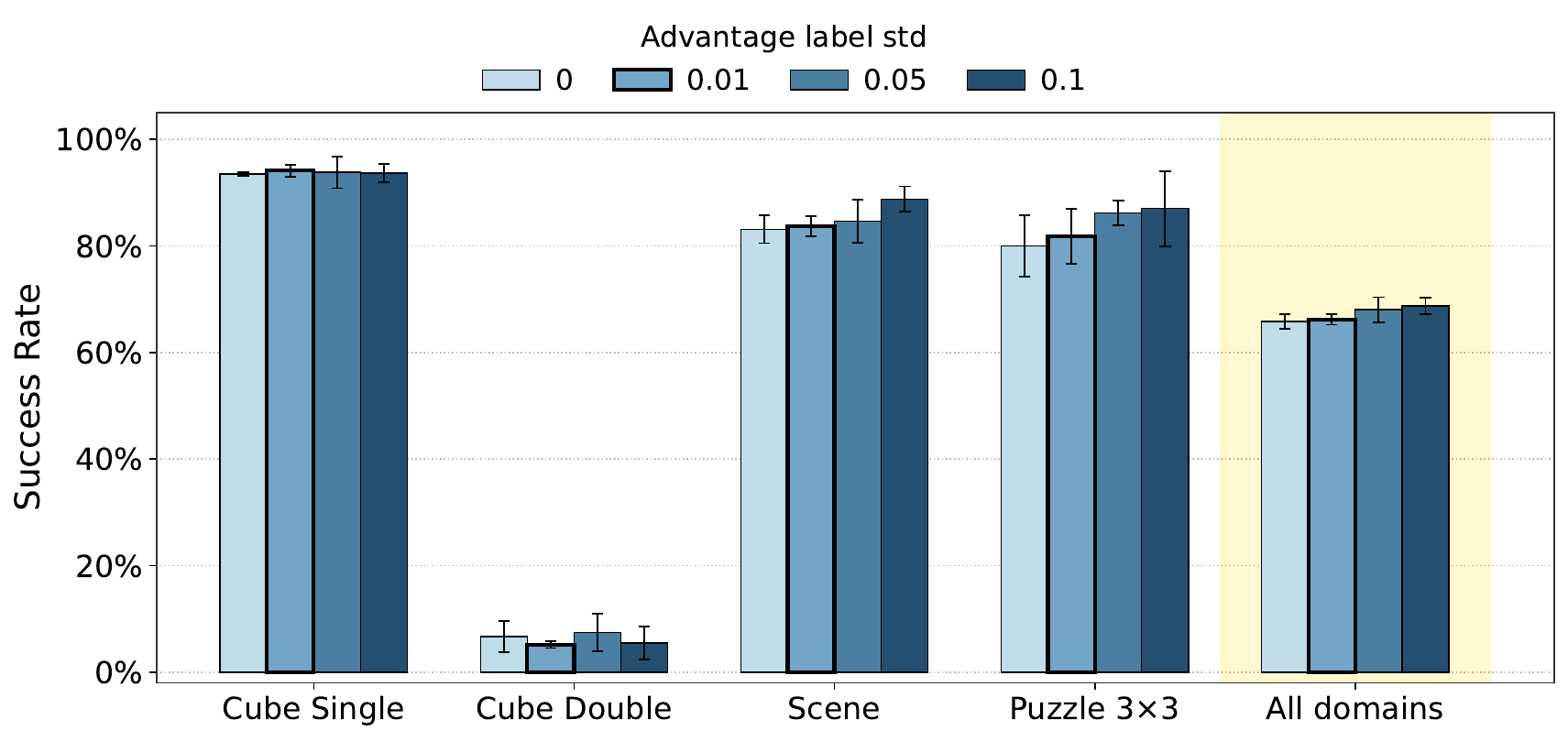}
    \caption{Sensitivity to different advantage label standard deviation: performance is not sensitive to the label standard deviations as long as the best guidance weight is picked. Averaged over $5$ tasks per domain and $4$ seeds.}
    \label{fig:ablation-different-label-noise}
\end{figure}

\subsection{Empirical Evaluation of Na\"ive Advantage Conditioned Policy RCP-A}
\label{apdx:rcp-a-evals}

\Cref{sec:problems} showed that a \naive advantage conditioned policy such as RCP-A suffers from the advantage alignment problem. We empirically evaluate that here, evaluating RCP-A on $20$ OGBench tasks when queried with different advantage values. As described in \Cref{apdx:baselines}, RCP-A models the distribution of advantage value over the dataset by modeling it as a Gaussian distribution and learning its mean $\mu$ and variance $\sigma^2$. During inference, RCP-A queries with advantage $\mu + k \sigma$, which is $k$ standard deviations above the mean dataset advantage. While the original paper~\citep{kumar2019reward} always set $k=1$, we experiment with different $k$ values during inference and plot the result in \Cref{fig:rcp_a_conditional_evals}. The results show that different query values has effectively no impact on performance, which corresponds with our empirical analysis in \Cref{sec:problems}.

\begin{figure}[h]
    \centering
    \includegraphics[width=0.8\linewidth]{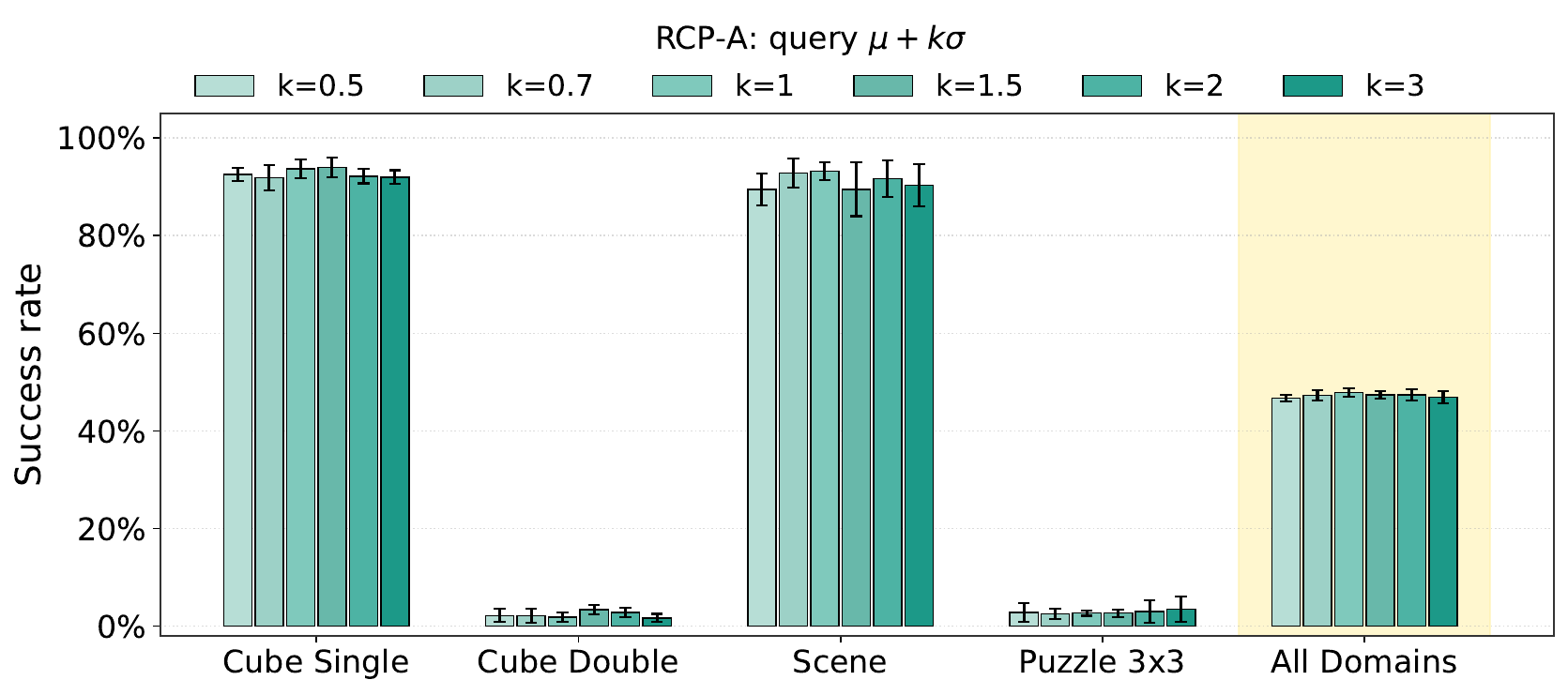}
    \caption{Na\"ive advantage conditioned policy RCP-A has the same performance across $20$ OGBench tasks ($5$ tasks per domain) when queried with different advantage values. Averaged over $4$ seeds.}
    \label{fig:rcp_a_conditional_evals}
\end{figure}

\subsection{Does A More Expressive Architecture Help with Advantage Alignment?}
\label{apdx:transformer-experiments}

In this section, we investigate whether we can improve advantage alignment, and in turn policy performance, of \naive advantage conditioning by using a more expressive conditional policy architecture.
Specifically, we experiment with using a transformer architecture to ingest the conditional advantage.
We compare RCP-A, which uses a FiLM-like architecture that uses the advantage input to modulate the hidden states at every layer of the policy network, to a variant that instead uses a transformer to handle conditional inputs.
We represent each action position in the \(H=5\) action chunk with a separate token. These action tokens, together with an advantage token, an observation token, and a flow-time token, are processed jointly through self-attention. Each contextualized action token is then projected directly to the velocity vector for its corresponding action position. We use four transformer blocks with model width 256, eight attention heads (32 dimensions per head), and feed-forward width 1024. The FiLM RCP-A actor has approximately \(1.39\)M parameters, while the transformer actor has approximately \(3.23\)M parameters.
We tune the learning rate smaller ($1e-4$) for the transformer version to prevent overfitting.

We plot the performance comparison between RCP-A and its transformer variant, as well as \methodname{}, in \Cref{fig:rcp-transformer}.
\Cref{fig:rcp-transformer} (left) shows that using a transformer is not significantly better in terms of advantage alignment: while the transformer policy output varies more when advantage input is changed, the policy gradient is still invariant to advantage inputs. In comparison, \methodname{} is much more responsive to different advantage inputs.
\Cref{fig:rcp-transformer} (right) plots the policy performance on OGBench. Overall, using a transformer does not affect policy performance much, and even slightly underperforms the original RCP-A agent.
This shows that there is no free lunch: we cannot expect to improve advantage alignment by simply swapping in a transformer policy backBFNe, and the residual parameterization and action imagination training schemes are critical.

\begin{figure}[h]
    \centering
    \includegraphics[width=0.48\linewidth]{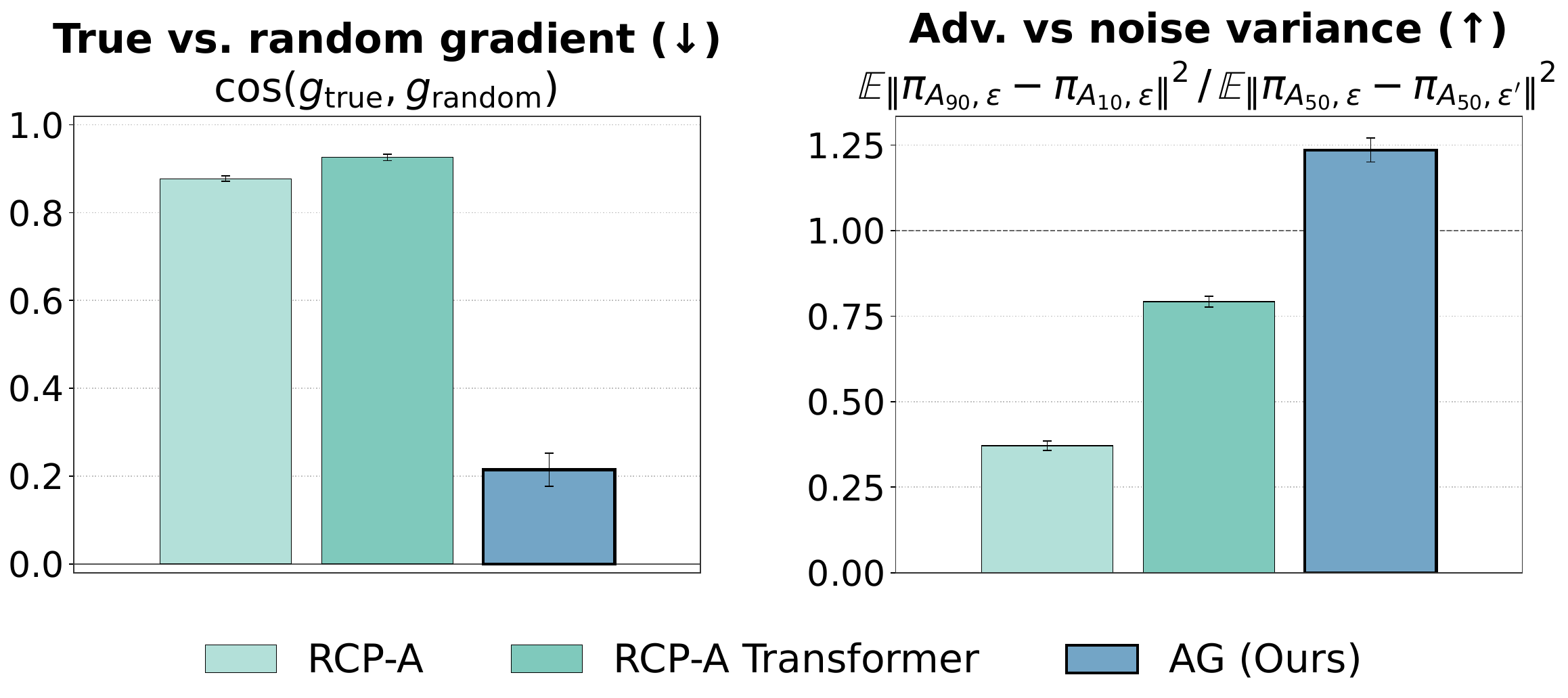}
    \includegraphics[width=0.50\linewidth]{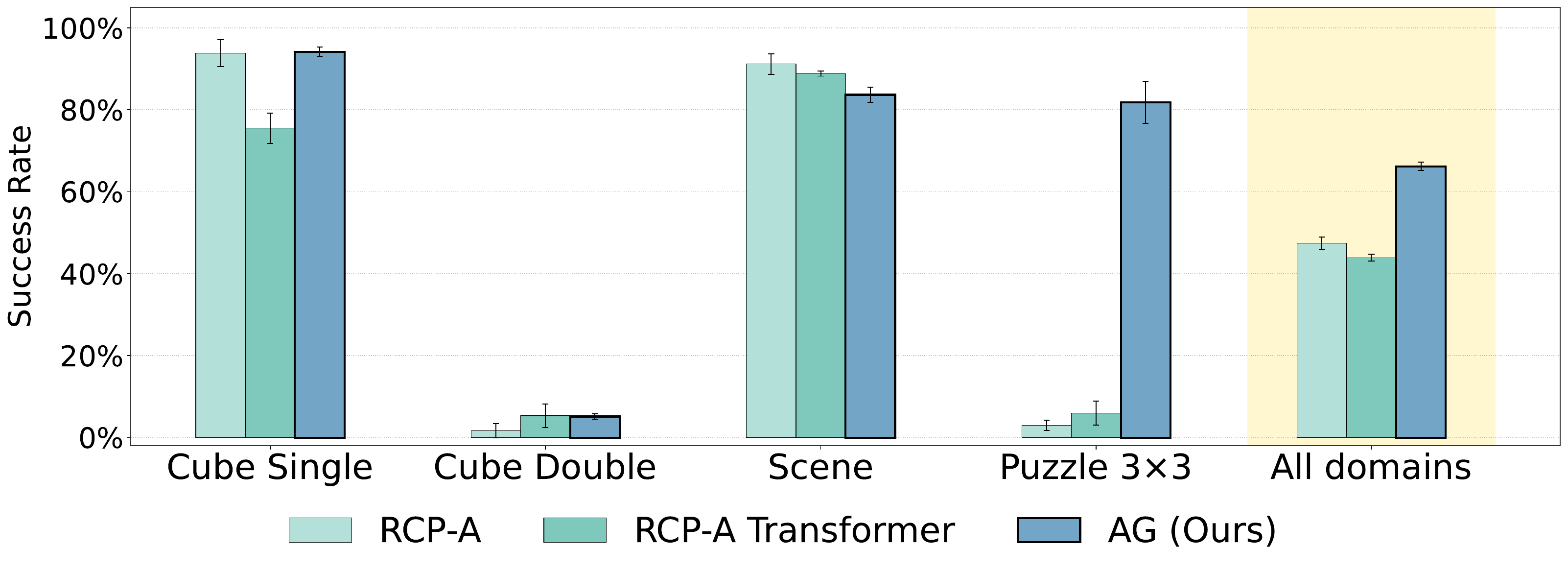}
    \caption{Using a transformer for the policy architecture does not significantly improve advantage alignment (left), and does not improve policy performance overall on OGBench (right).}
    \label{fig:rcp-transformer}
\end{figure}

\subsection{Analysis of Pure Conditional Sampling with \methodname{}}
\label{apdx:conditional-sampling}

In \Cref{sec:results-contrastive-guidance} we compared \methodname{} using different sampling schemes. In \Cref{fig:ag-conditional-eval}, we plot the performance of \methodname{} with pure conditional sampling on different environments using different query advantages.
On OGBench (top row), high query advantages leads to some performance decline, and experiences a sharp decline in \texttt{puzzle-3x3}, the longest horizon task.
On OCBench, all tasks experience a sharp drop in performance at high advantage values. This shows that high advantage value is likely out of the training distribution on OCBench where the offline dataset is demonstration-based and therefore more narrow.
\texttt{puzzle-3x3} could also suffer from the same issue because its state-action coverage is relatively smaller given its longer horizon for the same amount of offline transitions as other OGBench tasks.

\begin{figure}[h]
    \centering
    \includegraphics[width=0.8\linewidth]{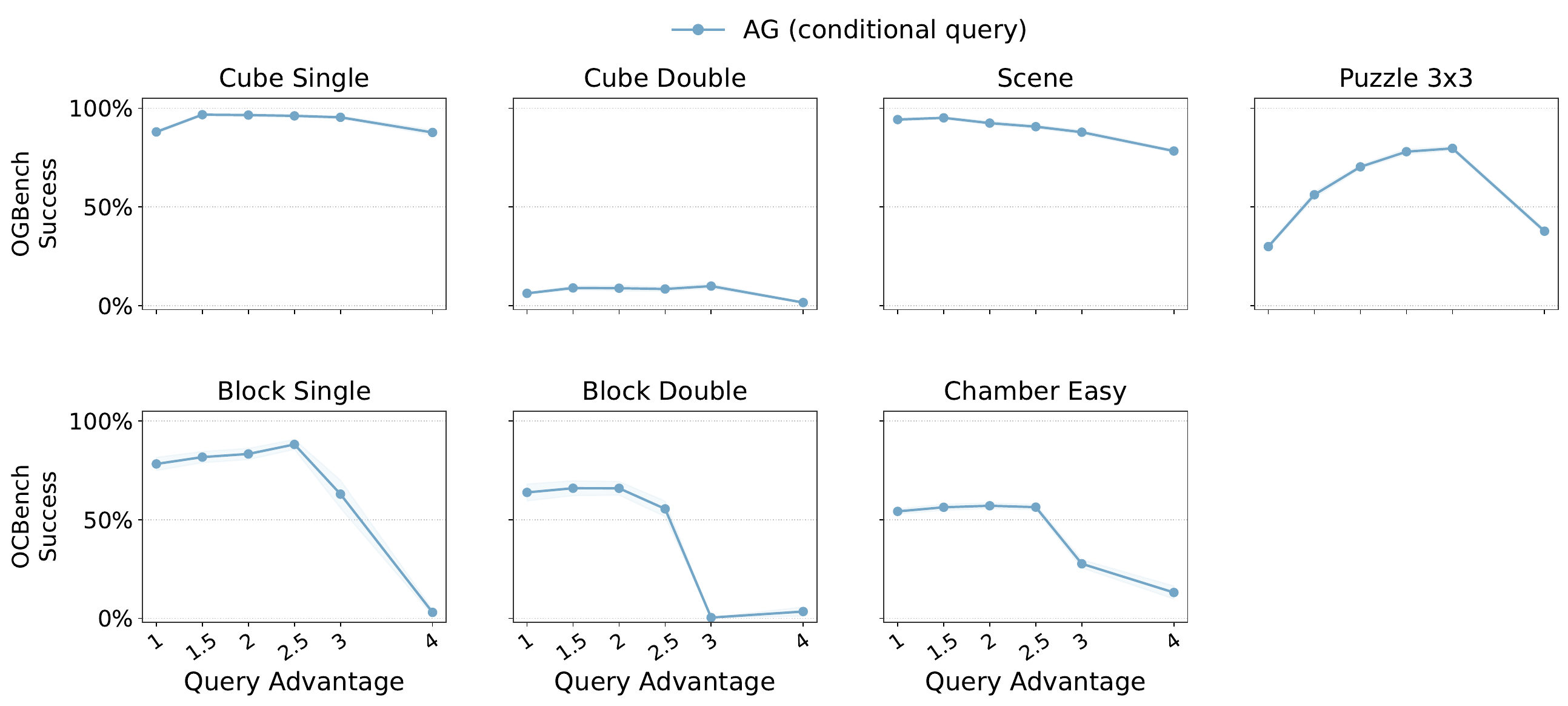}
    \caption{Pure conditional sampling of \methodname{}: querying with large advantage values leads to a sharp performance drop on OCBench and hard long horizon task \texttt{puzzle-3x3}.
    Averaged over different tasks in the same domain with $10$ seeds.
    }
    \label{fig:ag-conditional-eval}
\end{figure}

\subsection{Description of Baseline Methods}
\label{apdx:baselines}

We describe in detail the different categories of baseline methods. While some methods are implemented as Gaussian policies or diffusion policies in the original paper, we implement the flow-matching variant for all methods for fair comparison.

For all the agents below, we will assume that noise $\epsilon\sim\mathcal{N}(0,I)$, time $t\sim\mathcal{U}(0,1)$, and the expectations of training losses are taken over $(s,a)\sim\mathcal{D}$ unless otherwise specified.

\paragraph{(1) Actor-critic methods}

\paragraph{FQL}~\citep{park2025flow} learns a distilled one-step policy $\pi_\omega$ to maximize the critic, avoiding the need to take action gradient through the multi-step denoising process with backpropagation through time. Specifically, policy extraction with FQL requires learning two actors, a multi-step BC actor $\pi_\theta$, and a one-step policy $\pi_\omega$. The BC actor is trained to minimize the standard flow matching loss $\mathbb{E}\left[||v_\theta(x_t, s, t) - (a - \epsilon)||^2\right]$, while the one-step actor is trained to minimize:
\begin{equation}
    \mathbb{E}\left[
-Q_\phi(s,a^\pi)
+\alpha\left\|a^\pi-\mathrm{ODE}(v_\theta(s,\epsilon))\right\|_2^2
\right] \quad \text{where} \;\ a^\pi = v_\omega(s,\epsilon).
\end{equation}
$\mathrm{ODE}(\cdot)$ is the Euler integration process, and $\alpha$ is a coefficient that controls the behavior regularization strength.

\paragraph{EDP}~\citep{kang2023efficient} optimizes a multi-step flow policy $\pi_\theta$ to maximize the critic using a one-Euler-step approximation of the full denoising process, avoiding the full backpropagation through time. Specifically, given $x_t=(1-t)\epsilon+ta$, we approximate the full denoising process with the one-Euler step integration $\hat{a}_1(x_t)=x_t+(1-t)v_\theta(x_t,s,t)$, and train the policy to maximize $Q$-values subject to a behavior constraint:
\begin{equation}
    \mathcal{L}_{\mathrm{EDP}}(\theta)
    =
    \mathbb{E}\left[
        -Q_\phi(s,\hat{a}_1(x_t))
        + \alpha\left\|v_\theta(x_t,s,t)-(a-\epsilon)\right\|_2^2
    \right].
\end{equation}
The coefficient $\alpha$ controls the behavior regularization strength. During training, the action gradient passes through the one-step action approximation and is taken with respect to $x_t$.

\paragraph{(2) Supervised-learning-based methods}

\paragraph{CRR}~\citep{wang2020critic} uses the advantage estimates to filter the actions used for policy training. We adopt its binary-weight variant, training the flow policy only on dataset actions with positive estimated advantages:
\begin{equation}
    \mathcal{L}_{\mathrm{CRR}}(\theta)
    =
    \mathbb{E}\left[
        \mathbf{1}\{A(s,a)>0\}
        \left\|v_\theta(x_t,s,t)-(a-\epsilon)\right\|_2^2
    \right].
\end{equation}
This corresponds to behavior cloning on the positive-advantage subset. 

\paragraph{FAWR} is a baseline we construct by extending advantage weighted regression (AWR)~\citep{peng2019advantage} to flow-matching policies.
AWR is introduced for Gaussian policies, where the log likelihood of each action is weighted by its exponentiated advantage.
Unfortunately, this is not readily extendable to flow policies because the likelihood of diffusion and flow policies cannot be evaluated exactly. However, since diffusion (and under some constraints flow matching) optimize a lower bound on the log-likelihood~\citep{KingaDiffELBO}, we can optimize the lower bound of an AWR objective by directly using the flow matching loss to substitute the action log likelihood.
Specifically, we weight each dataset action's flow-matching loss by its exponentiated advantage:
\begin{equation}
    \mathcal{L}_{\mathrm{FAWR}}(\theta)
    =
    \mathbb{E}\left[
        \exp\left(\frac{A(s,a)}{\beta}\right)
        \left\|v_\theta(x_t,s,t)-(a-\epsilon)\right\|_2^2
    \right].
\end{equation}
The temperature $\beta>0$ controls how strongly training favors high-advantage actions. In practice, we clip $\exp(\frac{A(s,a)}{\beta})$ to have max value $100$.

\paragraph{RCP-R and RCP-A}~\citep{kumar2019reward} train policies conditioned on an estimated return-to-go (RCP-R) and advantage (RCP-A). While the original implementation of both RCP-A and RCP-R use Gaussian policies, we instantiate both with flow policies. Specifically, for a conditional signal $y$ (advantage or return-to-go), we optimize policy $\pi_\theta(a\mid s, y)$ with:
\begin{equation}
    \mathcal{L}_{\mathrm{RCP}}(\theta)
    =
    \mathbb{E}\left[
        \left\|v_\theta(x_t,s,t,y)-(a-\epsilon)\right\|_2^2
    \right].
\end{equation}
While the original RCP-R implementation estimates the return-to-go from $(s, a)$ using Monte Carlo estimation, we directly use the lower variance estimate $Q(s, a)$ to maintain comparability with other methods. For RCP-A, we estimate the advantage as $A(s, a) = Q(s, a) - V(s)$.
For both methods, the conditioning variable modulates the policy network's hidden features through FiLM~\citep{perez2018film}, which applies multiplicative interactions of the hidden features and the conditional variable at every layer.
Following ~\citet{kumar2019reward}, we estimate the mean $\mu_y$ and variance $\sigma_y^2$ of the conditioning variable using a running average, and, at inference time, query the policy with $y_{\mathrm{query}}=\mu_y+k\sigma_y$. The original paper always use $k=1$, whereas we sweep over different $k$ values.

\paragraph{(3) Diffusion guidance methods}

\paragraph{QGF}~\citep{zhou2026test} is a test-time RL method that trains a critic and a BC flow policy with standard flow matching loss and optimizes actions solely at test time with critic gradients guidance during sampling.
The main idea is to guide the action towards high-value directions at each denoising step using the critic's action gradient. To estimate the critic gradient at partially noised action $x_t$, QGF uses a single large Euler integration step to approximate the clean action and evaluate the critic gradient at this approximation $\hat{a}$. Specifically, QGF uses the following velocity:
\begin{equation}
    \begin{aligned}
        \hat{a}
        &= x_t+(1-t)v_{\hat{\pi}}(x_t,s,t),\\
        v_{\mathrm{QGF}}(x_t,s,t)
        &= v_{\hat{\pi}}(x_t,s,t)
        + w\left.\nabla_a Q(s,a)\right|_{a=\hat{a}}.
    \end{aligned}
\end{equation}
Here $w$ controls the guidance strength. QGF omits the Jacobian of the one-step approximation, using the critic gradient directly at $\hat{a}$. This is akin to classifier guidance of the policy using the $Q$-function as a classifier.

\paragraph{CFGRL}~\citep{frans2025diffusion} applies classifier-free guidance between an unconditional policy trained on the full dataset and a policy conditioned on positive advantage. CFGRL trains two policies: a conditional policy $\pi(a\mid s, o)$, where $o \in \{0, 1\}$ is a binary random variable that indicates optimality (\ie $o=1$ corresponds to optimal), and an unconditional policy $\pi(a\mid s)$.
Both policies are trained with flow matching. Following the original implementation, we equate the unconditional policy $\pi(a\mid s)$ with the conditional $\pi(a\mid s, o=0)$; optimality is defined as $A(s, a) = Q(s, a) - V(s) > 0$ and $\pi(a\mid s, o=1)$ is trained only on data with positive advantage estimates.
Conceptually, this is the same as training one unconditional policy on the whole dataset and another ``filtered'' policy on the positive-advantage subset.
During inference, CFGRL integrates:
\begin{equation}
    v_{\mathrm{CFGRL}}(x_t,s,t)
    =
    v_{\hat{\pi}}(x_t,s,t)
    + w\left[
        v_+(x_t,s,t)-v_{\hat{\pi}}(x_t,s,t)
    \right].
\end{equation}
The CFG weight $w$ controls the contribution of the positive-advantage policy.

\paragraph{With additional test-time compute:}

\paragraph{Best-of-N (BFN)} performs best-of-$N$ sampling of a base behavior cloning policy $\hat\pi$ trained with the flow matching objective. Specifically, given a critic $Q(s, a)$, BFN selects action
\begin{equation}
    a = \argmax_{a_1, \cdots,a_N \sim \hat{\pi}}Q(s,a_i).
\end{equation} 

\subsection{Critic Training Details}
\label{apdx:critic-training}

As described in \Cref{sec:experiments}, we follow the action chunking setting and learn both actors that produce action chunks and critics that are conditioned on action chunks. 

Our method is agnostic to how the advantage estimate is derived. For simplicity, in this paper we train value functions with IQL~\citep{kostrikov2021offline} and derive the advantage function $A(s, a) = Q(s, a) - V(s)$. The IQL objective is as follows:
\begin{align}
\mathcal{L}_{Q}(\theta)
&=
\mathbb{E}_{(s_t,a_{t:t+h},r_{t:t+h},s_{t+h})\sim\mathcal{D}}
\Big[
\big(
Q_\theta(s_t,a_{t:t+h})-(\sum_{t'=t}^{t+h-1} [\gamma^{t'-t} r_t']+\gamma^h\,V_{\bar{\psi}}(s_{t+h})
\big)^2
\Big], \\
\mathcal{L}_{V}(\psi)
&=
\mathbb{E}_{(s_t,a_{t:t+h})\sim\mathcal{D}}
\Big[
L_2^\tau\!\big(
Q_{\bar{\theta}}(s_t,a_{t:t+h})-V_\psi(s_t)
\big)
\Big],
\end{align}
where $L_2^\tau(u)=|\tau-\mathbf{1}(u<0)|u^2$ is the expectile  loss with expectile $\tau$,
and $\bar{\theta},\bar{\psi}$ are the target networks computed with EMA. We train all IQL state-action value functions with an ensemble of $2$ $Q(s, a)$ and aggregate them by taking the minimum.

\subsection{Hyperparameter and Tuning Details}
\label{apdx:hyperparameter}

The main results in \Cref{fig:main-results} and results in \Cref{fig:results-vs-bfn} and \Cref{fig:no-cfg} were run with $10$ random seeds, and all other experiments were run with $4$ random seeds unless otherwise specified. All error bars report the 95\% confidence interval.

\paragraph{Hyperparameter tunning.}
For each method, including \methodname{} and all baseline methods, we tune the task-specific hyperparameter per domain in the ranges specified in \Cref{tab:hyperparameter_ranges}. 
We tune hyperparameters with $4$ seeds, and re-run the experiments with $10$ new seeds using the chosen hyperparameter to report performance.
Specifically, on OGBench, each domain contains $5$ similar tasks of increasing hardness. We therefore tune hyperparameters per domain on tasks 2 and 4 only, and use the picked hyperparameter for all $5$ tasks on that domain. 
Tunning on task $2$ and $4$ represents an average performance on that domain since task $1$ to $5$ gradually increases in difficulty.
On OCBench, there are different numbers of tasks available per domain, and different tasks in the domain can be very different. Therefore, we tune the hyperparameter per-task per domain in OCBench.
The resulting hyperparameters we use are documented in \Cref{tab:domain_hyperparameters}.

\paragraph{Common hyperparameters.}
For all methods, we use shared hyperparemters in \Cref{tab:common_hyperparameters}. We use a small critic network size for all OGBench tasks because we find it uniformly improves performance for our and baseline methods (see \Cref{fig:critic-size-ablation}). We did not tune the critic size on OCBench.

\begin{figure}[h]
    \centering
    \includegraphics[width=\linewidth]{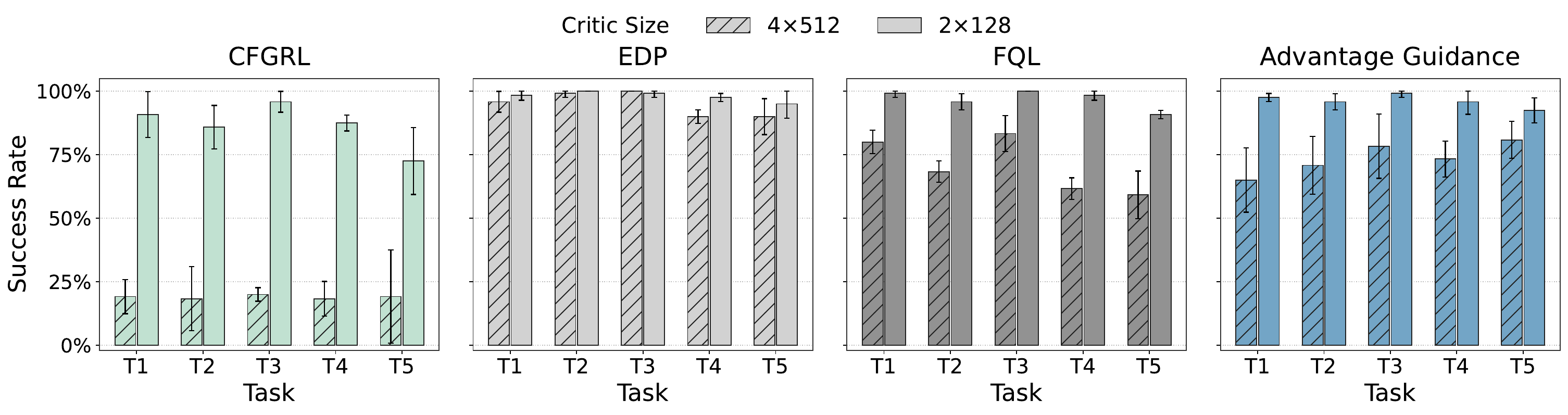}
    \caption{Comparison of two critic sizes on \texttt{cube-single} in OGBench on four different methods. $2\times 128$ significantly outperforms $4\times 512$ nearly for all methods, suggesting that the commonly used $4\times 512$ architecture suffers from critic overfitting.}
    \label{fig:critic-size-ablation}
\end{figure}

\paragraph{\methodshort{} hyperparameters.}
For \methodname{}, we use the same value per benchmark for  $\sigma_y$, and the same $y_+, y_-, K$ universally (see \Cref{tab:method_common_hyperparameters}).
This is because solely tunning the guidance weight is sufficient: 
we can observe this from the theoretical derivation in \Cref{sec:guidance-theory}, and we also empirically observe this in our ablation studies in \Cref{fig:ablation-different-label-noise} and \Cref{fig:dense-condition-pair}. We use these hyperprameters for \methodname{} in all experiments unless otherwise specified.
We use $\sigma_y=0.1$ on OGBench instead of $0.01$ because \Cref{fig:ablation-different-label-noise} shows it to be slightly more performant, though the advantage is tiny. We did not tune $\sigma_y$ on OCBench.

\begin{table}[h]
    \centering
    \small
    \renewcommand{\arraystretch}{1.12}
    \begin{tabular}{lcc}
        \toprule
        \textbf{Hyperparameter}
            & \textbf{OGBench}
            & \textbf{OCBench} \\
        \midrule
        Batch size
            & $256$
            & $1024$ \\
        Discount factor $\gamma$
            & $0.99$
            & $0.999$ \\
        IQL expectile $\tau$
            & $0.9$
            & $0.5$ \\
        Critic ensemble size 
            & 2
            & 2  \\
        Critic aggregation method
            & min 
            & min \\
        Target-network update rate
            & $0.005$
            & $0.005$ \\
        Action chunk horizon $H$
            & $5$
            & $3$ \\
        Flow integration / denoising steps
            & $10$
            & $10$ \\
        Actor hidden dimensions
            & $512 \times 4$
            & $1024 \times 8$ \\
        Critic/value hidden dimensions
            & $128 \times 2$
            & $1024 \times 8$ \\
        Actor / Critic Learning rate
            & $3\times 10^{-4}$
            & $1\times 10^{-4}$ \\
        Optimizer
            & Adam
            & Adam \\
        Offline training steps
            & $5\times 10^{5}$
            & $1\times 10^{6}$ ($1 \times 10^{5}$ for AG on Block Single) \\
        Evaluation episodes
            & $30$
            & $500$ \\
        \bottomrule
    \end{tabular}
    \caption{
        \textbf{Common hyperparameters} used for the OGBench and OCBench experiments in \Cref{fig:main-results}.
    }
    \label{tab:common_hyperparameters}
\end{table}

\begin{table}[h]
    \centering
    \renewcommand{\arraystretch}{1.15}
    \setlength{\tabcolsep}{12pt}
    \begin{tabular}{lcc}
        \toprule
        \textbf{Hyperparameter}
            & \textbf{OGBench}
            & \textbf{OCBench} \\
        \midrule
        Positive condition $y_{+}$
            & $1$
            & $1$ \\
        Negative condition $y_{-}$
            & $-1$
            & $-1$ \\
        Advantage label standard deviation $\sigma_y$
            & $0.1$
            & $0.01$ \\
        Number of Imagined Actions $K$ 
            & $7$ 
            & $7$  \\
        \bottomrule
    \end{tabular}

    \caption{
        \textbf{Common hyperparameters for \methodname{}}. As shown in \Cref{fig:ablation-different-label-noise} and \Cref{fig:dense-condition-pair}, different values of $y_+, y_-, \sigma_y$ hyperparameters lead to very similar results. We therefore fix these parameters across tasks on the same benchmark and only tune the guidance weight for \methodname{}.
    }
    \label{tab:method_common_hyperparameters}
\end{table}

\begin{table*}[h]
    \centering
    \renewcommand{\arraystretch}{1.15}
    \setlength{\tabcolsep}{4pt}
    \resizebox{\textwidth}{!}{%
    \begin{tabular}{@{}ll*{9}{c}@{}}
        \toprule
        \textbf{Suite}
        & \textbf{Domain}
        & \textbf{EDP ($\alpha_\mathrm{EDP}$})
        & \textbf{FQL ($\alpha_\mathrm{FQL}$)}
        & \textbf{QGF ($w_\mathrm{QGF}$)}
        & \textbf{CFGRL ($w_\mathrm{CFGRL}$)}
        & \textbf{FAWR ($\beta$)}
        & \textbf{RCP-A ($k$)}
        & \textbf{RCP-R ($k$)}
        & \textbf{CRR}
        & \textbf{AG ($w$)} \\
        \midrule

        OGBench & Cube Single
            & $100$ & $1000$ & $0.08$ & $2$ & $0.3$
            & 1 & 1 & -- & $1$ \\

        OGBench & Cube Double
            & $10$ & $100$ & $0.16$ & $2$ & $0.3$
            & 1 & 1 & -- & $1.5$ \\

        OGBench & Scene
            & $30$ & $300$ & $0.16$ & $2$ & $10$
            & 1 & 1 & -- & $1$ \\

        OGBench & Puzzle $3{\times}3$
            & $30$ & $300$ & $0.16$ & $3$ & $0.3$
            & 1 & 1 & -- & $2.5$ \\

        \midrule

        OCBench & Block Single
            & $1$ & $300$ & $0.02$ & $10$ & $3$
            & 1 & 1 & -- & $1.5$ \\

        OCBench & Block Double
            & $(1,1)$ & $(100,100)$ & $(0.32,0.32)$ & $(15,15)$
            & $(3,0.3)$ & 1 & 1 & -- & $(3,4)$ \\

        OCBench & Chamber Easy
            & $(1,1,10)$ & $(100,100,100)$ & $(0.32,0.32,0.32)$
            & $(5,15,10)$ & $(1,1,1)$
            & 1 & 1 & -- & $(4,3,2)$ \\

        \bottomrule
    \end{tabular}%
    }

    \caption{
        \textbf{Domain-specific hyperparameters} used for OGBench
        and OCBench results in \Cref{fig:main-results}. For OCBench,
        parenthesized values are ordered by task number. For example, $(3,4)$
        denotes the values used for Tasks 1 and 2 of Block Double. A dash indicates that the method has no tuned domain-specific hyperparameter.
    }
    \label{tab:domain_hyperparameters}
\end{table*}

\begin{table*}[h]
    \centering
    \renewcommand{\arraystretch}{1.18}
    \resizebox{\textwidth}{!}{%
    \begin{tabular}{lll}
        \toprule
        \textbf{Method}
            & \textbf{Hyperparameter}
            & \textbf{Tunning Range} \\
        \midrule

        EDP
            & BC coefficient $\alpha_\mathrm{EDP}$
            & \makecell[l]{
                OGBench: $\{10,30,100,300\}$; \quad
                OCBench: $\{1,10\}$ 
              } \\

        FQL
            & BC coefficient $\alpha_\mathrm{FQL}$
            & $\{100,300,1000\}$ \\

        QGF
            & Guidance weight $w$
            & $\{0,0.002,0.004,0.01,0.02,0.04,0.08,0.16,0.32\}$ \\

        CFGRL
            & Guidance weight $w$
            & \makecell[l]{
                OGBench: $\{0,0.5,1,2,3,5,10,15\}$;\quad
                OCBench: $\{0,0.5,1,1.5,2,3,5,10,15\}$
              } \\

        FAWR
            & Inverse temperature $\beta$
            & \makecell[l]{
                OGBench: $\{0.3,1,3,10\}$; \quad
                OCBench: $\{0.3,1,3\}$
              } \\

        RCP-A
            & inference std multiplier $k$
            & \makecell[l]{
                OGBench: $\{ 0.5, 0.7, 1, 1.5, 2, 3\}$;
                OCBench: $\{1\}$
            } \\

        AG (Ours)
            & Guidance weight $w$
            & \makecell[l]{
                OGBench: $\{0,0.75,1,1.25,1.5,2,2.5,4\}$;\quad
                OCBench: $\{0,0.5,1,1.5,2,3,4\}$
              } \\

        \bottomrule
    \end{tabular}%
    }
    \caption{
        Hyperparameter ranges for OGBench and OCBench experiments.
    }
    \label{tab:hyperparameter_ranges}
\end{table*}

\subsection{Tasks and Environment Details}
\label{apdx:tasks-envs}

We use all the available task for each domain in two of the benchmarks. OCBench offers different numbers of tasks per domain, ranging from $1$ to $3$.
OGBench tasks all have offline datasets with $1$M transitions. OCBench block single and chamber easy has $10$k offline transitions, while block double has $30$k transitions. Note that while OCBench offers demonstration-based datasets, a small fraction of the data is unsuccessful demonstrations.

\clearpage
\subsection{Detailed Results}
\label{apdx:detailed-results}

\begin{figure}[h]
    \centering
    \includegraphics[width=\linewidth]{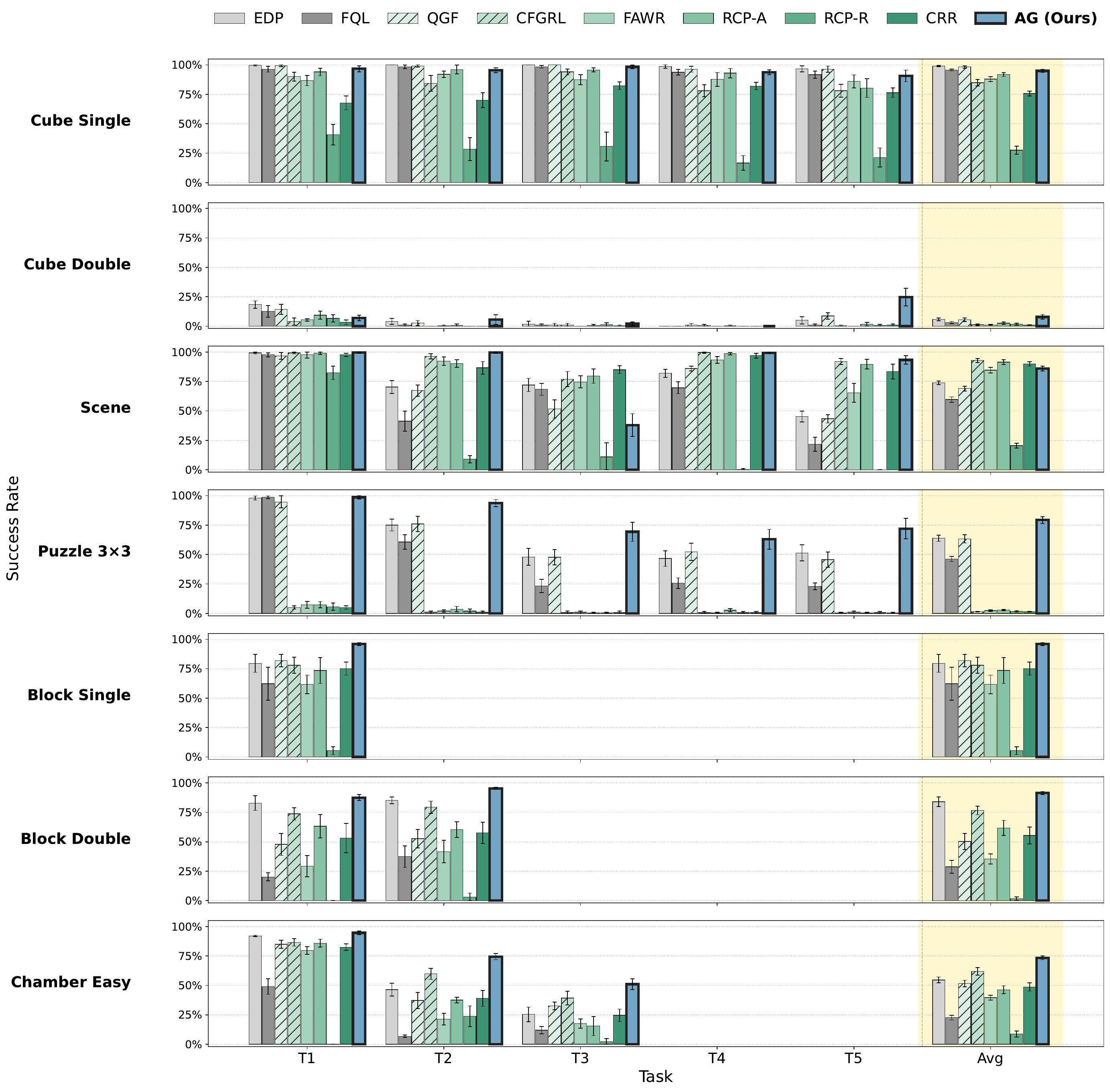}
    \caption{Task breakdown of \Cref{fig:main-results}: Comparing \methodname{} against prior methods in offline RL setting on 20 OGBench tasks and 6 OCBench tasks. Averaged over 10 random seeds. See exact plotted numbers in \Cref{tab:main-result-details}.}
    \label{fig:main-results-details}
\end{figure}

\definecolor{agblue}{HTML}{73A5C6}
\definecolor{aglightblue}{HTML}{EDF5FA}
\definecolor{agdarkblue}{HTML}{0076BA}

\newcommand{\best}[1]{$\mathbf{#1}$}
\newcommand{\agval}[1]{%
  \cellcolor{aglightblue}\textcolor{agblue}{$#1$}}
\newcommand{\agbest}[1]{%
  \cellcolor{aglightblue}\textcolor{agdarkblue}{$\mathbf{#1}$}}

\begin{table}[t]
    \centering
    \small
    \setlength{\tabcolsep}{2.5pt}
    \renewcommand{\arraystretch}{1.12}

    \begin{adjustbox}{max width=\linewidth}
    \begin{tabular}{@{}llrrrrrrrrr@{}}
        \toprule
        \textbf{Environment} & \textbf{Task} & \textbf{EDP} & \textbf{FQL} & \textbf{QGF} &
        \textbf{CFGRL} & \textbf{FAWR} & \textbf{RCP-A} & \textbf{RCP-R} & \textbf{CRR} &
        \textcolor{agdarkblue}{\textbf{AG}} \\
        \midrule

        \multirow{6}{*}{Cube Single}
        & T1  & \best{99.7 \pm 0.7} & \best{96.3 \pm 2.5} & \best{99.3 \pm 0.9} & $90.0 \pm 3.8$ & $86.7 \pm 4.4$ & $94.0 \pm 3.2$ & $40.7 \pm 8.6$ & $67.7 \pm 5.8$ & \agbest{96.7 \pm 2.6} \\
        & T2  & \best{100.0 \pm 0.0} & \best{98.3 \pm 1.5} & \best{99.0 \pm 1.0} & $84.3 \pm 6.5$ & $92.0 \pm 2.8$ & \best{96.0 \pm 3.7} & $28.3 \pm 9.8$ & $70.0 \pm 6.4$ & \agbest{95.3 \pm 2.0} \\
        & T3  & \best{100.0 \pm 0.0} & \best{98.3 \pm 1.1} & \best{100.0 \pm 0.0} & $94.0 \pm 2.5$ & $87.3 \pm 4.3$ & \best{95.7 \pm 1.7} & $30.7 \pm 12.4$ & $82.3 \pm 3.1$ & \agbest{98.3 \pm 1.5} \\
        & T4  & \best{98.7 \pm 1.4} & $93.7 \pm 2.5$ & \best{96.3 \pm 2.5} & $78.0 \pm 5.1$ & $87.7 \pm 5.8$ & $93.0 \pm 4.0$ & $16.7 \pm 6.2$ & $82.0 \pm 3.1$ & \agval{93.7 \pm 2.1} \\
        & T5  & \best{96.7 \pm 2.6} & $91.7 \pm 3.1$ & \best{96.3 \pm 2.7} & $78.0 \pm 5.4$ & $86.0 \pm 5.5$ & $80.3 \pm 7.9$ & $21.3 \pm 8.1$ & $76.3 \pm 4.0$ & \agval{90.7 \pm 4.9} \\
        & Avg & \best{99.0 \pm 0.6} & \best{95.7 \pm 0.8} & \best{98.2 \pm 1.0} & $84.9 \pm 2.7$ & $87.9 \pm 2.0$ & $91.8 \pm 1.7$ & $27.5 \pm 3.5$ & $75.7 \pm 2.0$ & \agbest{94.9 \pm 1.2} \\
        \midrule

        \multirow{6}{*}{Cube Double}
        & T1  & \best{18.3 \pm 3.1} & $12.7 \pm 4.9$ & $14.3 \pm 4.4$ & $4.0 \pm 3.0$ & $5.3 \pm 1.1$ & $9.3 \pm 3.5$ & $6.7 \pm 3.1$ & $3.3 \pm 1.9$ & \agval{7.0 \pm 2.5} \\
        & T2  & $4.0 \pm 2.5$ & $1.0 \pm 1.0$ & $2.7 \pm 2.1$ & $0.0 \pm 0.0$ & $0.3 \pm 0.7$ & $0.7 \pm 1.3$ & $0.0 \pm 0.0$ & $0.0 \pm 0.0$ & \agbest{5.7 \pm 4.0} \\
        & T3  & $1.7 \pm 2.6$ & $1.0 \pm 1.0$ & $1.0 \pm 1.4$ & $1.0 \pm 1.4$ & $0.0 \pm 0.0$ & $0.7 \pm 0.9$ & $1.3 \pm 1.4$ & $0.3 \pm 0.7$ & \agbest{2.0 \pm 1.4} \\
        & T4  & $0.0 \pm 0.0$ & $0.0 \pm 0.0$ & \best{1.0 \pm 1.4} & $0.7 \pm 0.9$ & $0.0 \pm 0.0$ & $0.3 \pm 0.7$ & $0.0 \pm 0.0$ & $0.0 \pm 0.0$ & \agval{0.0 \pm 0.0} \\
        & T5  & $5.0 \pm 3.1$ & $1.0 \pm 1.0$ & $8.7 \pm 2.8$ & $0.3 \pm 0.7$ & $0.0 \pm 0.0$ & $1.7 \pm 1.5$ & $0.7 \pm 0.9$ & $1.0 \pm 1.0$ & \agbest{24.7 \pm 7.4} \\
        & Avg & $5.8 \pm 1.2$ & $3.1 \pm 1.0$ & $5.5 \pm 1.5$ & $1.2 \pm 0.7$ & $1.1 \pm 0.3$ & $2.5 \pm 1.0$ & $1.7 \pm 0.9$ & $0.9 \pm 0.3$ & \agbest{7.9 \pm 2.0} \\
        \midrule

        \multirow{6}{*}{Scene}
        & T1  & \best{99.3 \pm 0.9} & \best{97.7 \pm 1.7} & \best{96.7 \pm 2.9} & \best{99.3 \pm 0.9} & \best{97.7 \pm 2.8} & \best{99.0 \pm 1.0} & $82.3 \pm 5.7$ & \best{97.7 \pm 1.4} & \agbest{99.7 \pm 0.7} \\
        & T2  & $70.3 \pm 5.4$ & $41.3 \pm 8.6$ & $67.3 \pm 4.8$ & \best{96.3 \pm 2.3} & $92.3 \pm 3.7$ & $90.3 \pm 3.3$ & $9.0 \pm 3.1$ & $86.7 \pm 5.3$ & \agbest{99.7 \pm 0.7} \\
        & T3  & $72.0 \pm 5.6$ & $68.3 \pm 5.0$ & $51.7 \pm 7.8$ & $77.0 \pm 6.4$ & $74.7 \pm 5.1$ & $79.7 \pm 6.1$ & $11.0 \pm 12.0$ & \best{85.0 \pm 3.3} & \agval{38.0 \pm 9.6} \\
        & T4  & $82.0 \pm 3.4$ & $69.7 \pm 5.0$ & $86.0 \pm 2.1$ & \best{99.7 \pm 0.7} & $93.3 \pm 2.9$ & \best{98.7 \pm 1.1} & $0.3 \pm 0.7$ & \best{97.0 \pm 2.1} & \agbest{99.3 \pm 0.9} \\
        & T5  & $45.3 \pm 4.6$ & $21.7 \pm 6.0$ & $43.3 \pm 3.5$ & \best{92.0 \pm 2.6} & $65.3 \pm 8.0$ & \best{89.7 \pm 4.1} & $0.0 \pm 0.0$ & $83.3 \pm 6.3$ & \agbest{93.3 \pm 3.5} \\
        & Avg & $73.8 \pm 1.5$ & $59.7 \pm 2.3$ & $69.0 \pm 2.0$ & \best{92.9 \pm 1.7} & $84.7 \pm 2.4$ & \best{91.5 \pm 2.0} & $20.5 \pm 2.1$ & \best{89.9 \pm 1.7} & \agval{86.0 \pm 2.0} \\
        \midrule

        \multirow{6}{*}{Puzzle $3\times3$}
        & T1  & \best{98.0 \pm 1.7} & \best{98.7 \pm 1.1} & \best{94.7 \pm 5.1} & $5.0 \pm 1.5$ & $7.3 \pm 2.9$ & $7.3 \pm 2.7$ & $5.7 \pm 3.1$ & $5.0 \pm 1.5$ & \agbest{98.7 \pm 1.4} \\
        & T2  & $75.0 \pm 5.0$ & $60.7 \pm 6.1$ & $76.0 \pm 6.6$ & $1.0 \pm 1.0$ & $2.0 \pm 1.1$ & $3.7 \pm 2.3$ & $2.0 \pm 1.7$ & $1.0 \pm 1.0$ & \agbest{93.7 \pm 3.0} \\
        & T3  & $48.0 \pm 7.2$ & $23.3 \pm 5.7$ & $47.7 \pm 6.5$ & $0.7 \pm 1.3$ & $1.0 \pm 1.0$ & $0.3 \pm 0.7$ & $0.3 \pm 0.7$ & $0.7 \pm 1.3$ & \agbest{69.3 \pm 8.1} \\
        & T4  & $46.7 \pm 6.6$ & $25.7 \pm 4.5$ & $52.3 \pm 7.4$ & $0.7 \pm 0.9$ & $0.3 \pm 0.7$ & $2.7 \pm 1.3$ & $0.7 \pm 0.9$ & $0.7 \pm 0.9$ & \agbest{63.0 \pm 8.4} \\
        & T5  & $51.3 \pm 6.8$ & $23.0 \pm 3.0$ & $45.7 \pm 6.5$ & $0.3 \pm 0.7$ & $1.0 \pm 1.0$ & $0.3 \pm 0.7$ & $0.7 \pm 0.9$ & $0.3 \pm 0.7$ & \agbest{72.0 \pm 8.7} \\
        & Avg & $63.8 \pm 2.6$ & $46.3 \pm 2.2$ & $63.3 \pm 3.5$ & $1.5 \pm 0.3$ & $2.3 \pm 0.7$ & $2.9 \pm 0.4$ & $1.9 \pm 0.7$ & $1.5 \pm 0.3$ & \agbest{79.3 \pm 2.9} \\
        \midrule

        \multirow{2}{*}{Block Single}
        & T1  & $79.5 \pm 7.6$ & $62.2 \pm 14.0$ & $81.8 \pm 5.2$ & $77.9 \pm 6.9$ & $61.6 \pm 8.0$ & $73.5 \pm 11.1$ & $5.5 \pm 3.3$ & $75.1 \pm 5.5$ & \agbest{95.9 \pm 1.3} \\
        & Avg & $79.5 \pm 7.6$ & $62.2 \pm 14.0$ & $81.8 \pm 5.2$ & $77.9 \pm 6.9$ & $61.6 \pm 8.0$ & $73.5 \pm 11.1$ & $5.5 \pm 3.3$ & $75.1 \pm 5.5$ & \agbest{95.9 \pm 1.3} \\
        \midrule

        \multirow{3}{*}{Block Double}
        & T1  & $82.9 \pm 6.3$ & $20.1 \pm 3.4$ & $47.8 \pm 9.3$ & $73.7 \pm 5.2$ & $29.3 \pm 9.0$ & $63.2 \pm 10.0$ & $0.0 \pm 0.0$ & $53.2 \pm 12.5$ & \agbest{87.5 \pm 2.6} \\
        & T2  & $85.2 \pm 3.0$ & $37.3 \pm 9.1$ & $52.6 \pm 7.9$ & $79.3 \pm 5.2$ & $41.6 \pm 9.5$ & $60.3 \pm 6.6$ & $3.1 \pm 3.3$ & $57.5 \pm 9.1$ & \agbest{95.3 \pm 0.9} \\
        & Avg & $84.0 \pm 4.0$ & $28.7 \pm 5.4$ & $50.2 \pm 6.8$ & $76.5 \pm 3.6$ & $35.4 \pm 4.2$ & $61.8 \pm 6.4$ & $1.6 \pm 1.6$ & $55.4 \pm 7.2$ & \agbest{91.4 \pm 1.3} \\
        \midrule

        \multirow{4}{*}{Chamber Easy}
        & T1  & \best{92.0 \pm 0.6} & $49.0 \pm 6.5$ & $84.9 \pm 3.5$ & $86.5 \pm 3.2$ & $79.6 \pm 3.2$ & $85.8 \pm 3.4$ & $0.1 \pm 0.1$ & $82.4 \pm 2.7$ & \agbest{94.6 \pm 1.5} \\
        & T2  & $46.4 \pm 5.5$ & $6.8 \pm 1.1$ & $37.2 \pm 6.8$ & $59.8 \pm 4.7$ & $21.4 \pm 5.0$ & $37.5 \pm 2.3$ & $23.8 \pm 8.8$ & $39.0 \pm 6.8$ & \agbest{74.4 \pm 2.6} \\
        & T3  & $25.4 \pm 6.2$ & $11.9 \pm 3.1$ & $32.6 \pm 3.3$ & $39.2 \pm 5.9$ & $17.6 \pm 3.8$ & $15.5 \pm 8.2$ & $2.2 \pm 2.5$ & $24.6 \pm 5.3$ & \agbest{51.0 \pm 4.6} \\
        & Avg & $54.6 \pm 2.5$ & $22.6 \pm 2.0$ & $51.6 \pm 2.8$ & $61.8 \pm 3.2$ & $39.5 \pm 1.9$ & $46.3 \pm 3.3$ & $8.7 \pm 2.6$ & $48.7 \pm 3.5$ & \agbest{73.3 \pm 1.6} \\

        \bottomrule
    \end{tabular}
    \end{adjustbox}

    \caption{Numeric details of \Cref{fig:main-results-details}. Results report mean $\pm$ 95\% confidence intervals across 10 random seeds. Bold entries indicate means at least 95\% of the best mean in each comparison group.}
    \label{tab:main-result-details}

\end{table}

\begin{figure}[h]
    \centering
    \includegraphics[width=0.95\linewidth]{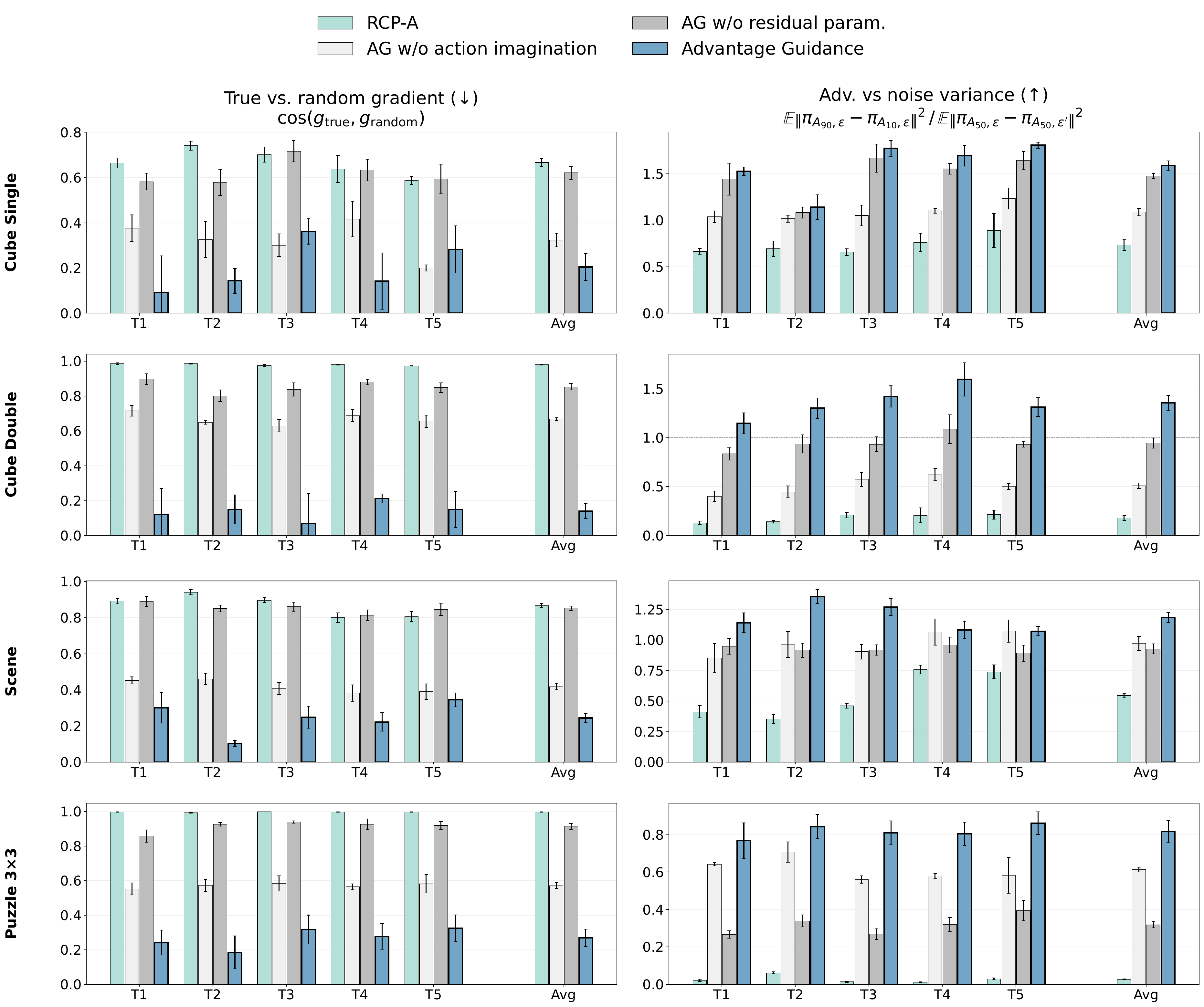}
    \caption{Task breakdown of \Cref{fig:ablate-residual-aug-metrics}: Comparing (1) gradient similarity between gradient of policy loss conditioned on true and random advantage values and (2) the variance of policy output conditioned on different advantage values or initial noise.}
    \label{fig:attention-challenges-detail}
\end{figure}

\begin{figure}[h]
    \centering
    \includegraphics[width=0.9\linewidth]{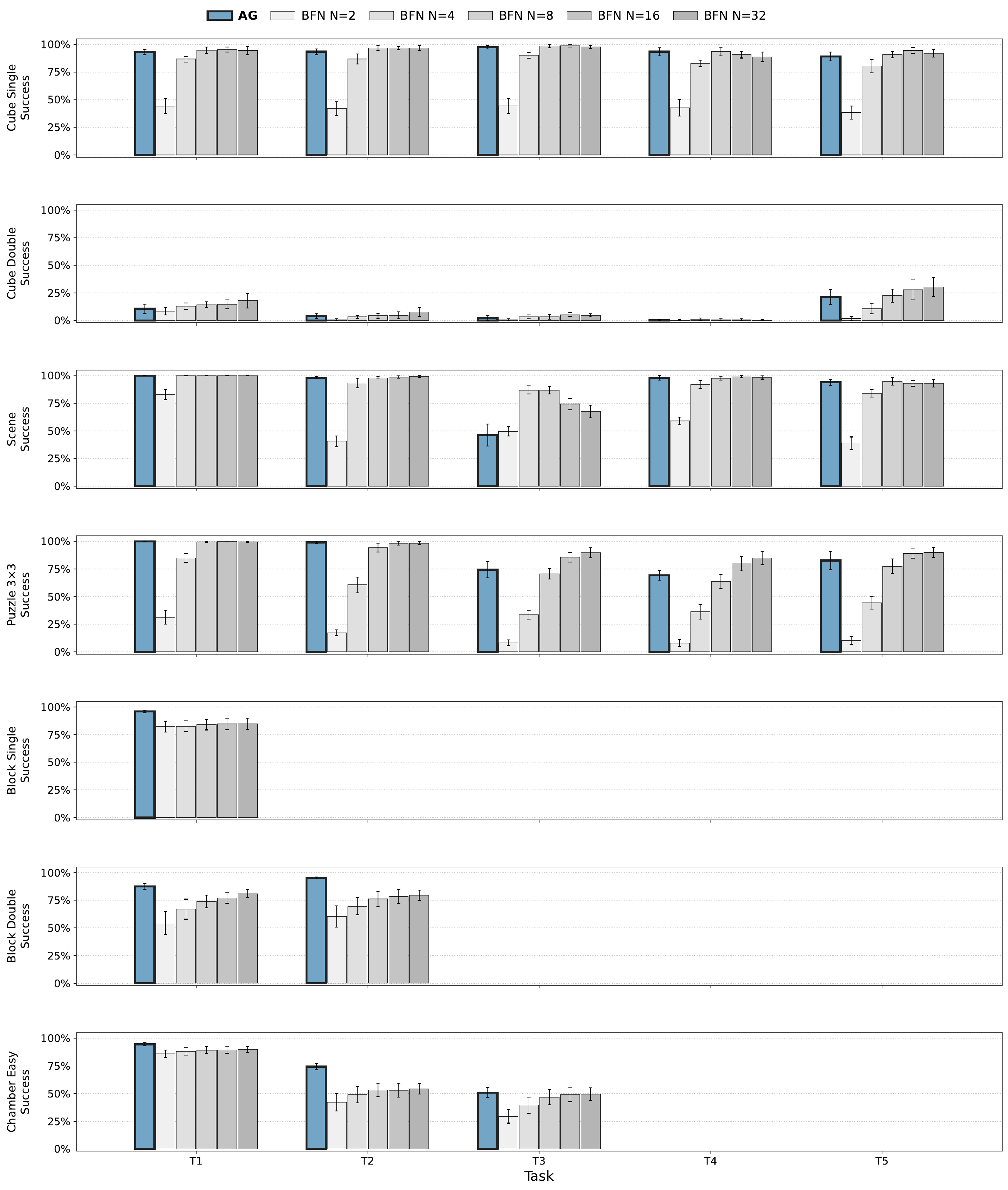}
    \caption{Task breakdown of \Cref{fig:results-vs-bfn}, comparing \methodname{} with best-of-N (BFN) sampling at different values of $N$ on OGBench and OCBench. Averaged over $10$ random seeds.}
    \label{fig:bfn-details}
\end{figure}

\begin{figure}[h]
    \centering
    \includegraphics[width=0.95\linewidth]{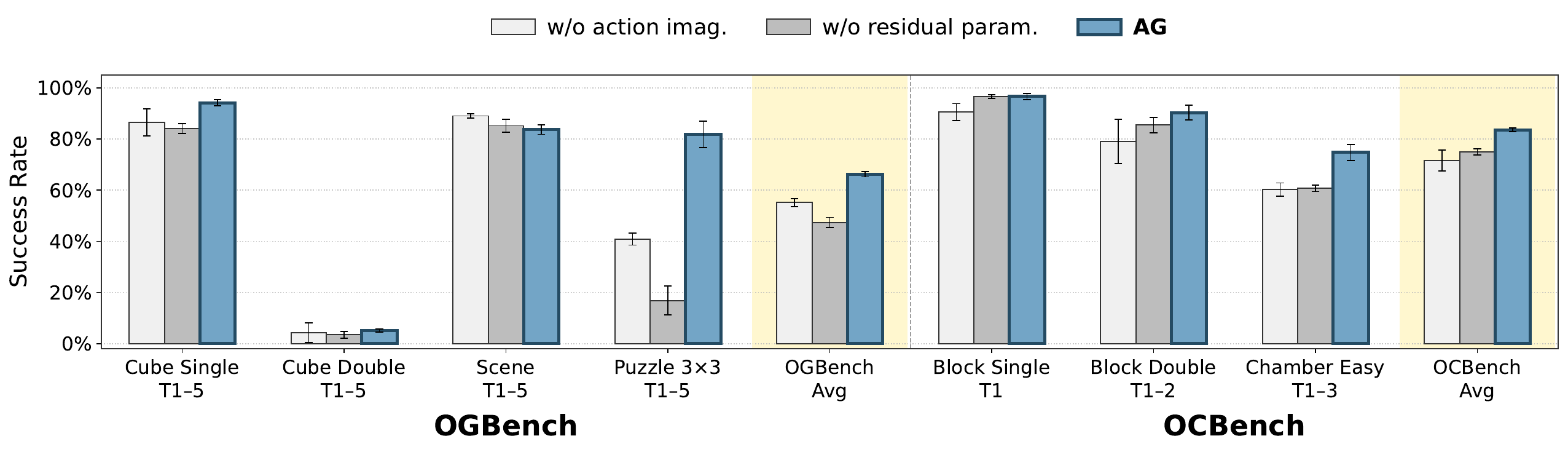}
    \caption{Domain breakdown of \Cref{fig:ablate-residual-aug-performance}: ablation of residual parameterization and action imagination.}
    \label{fig:ablate-residual-aug-details}
\end{figure}

\begin{figure}[h]
    \centering
    \includegraphics[width=0.9\linewidth]{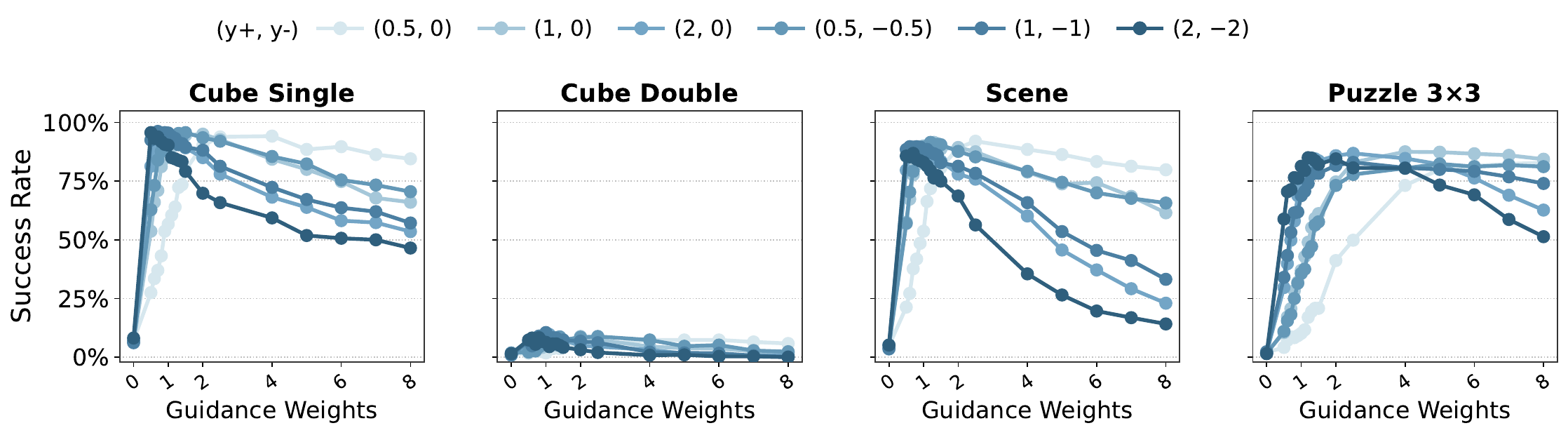}
    \includegraphics[width=0.6\linewidth]{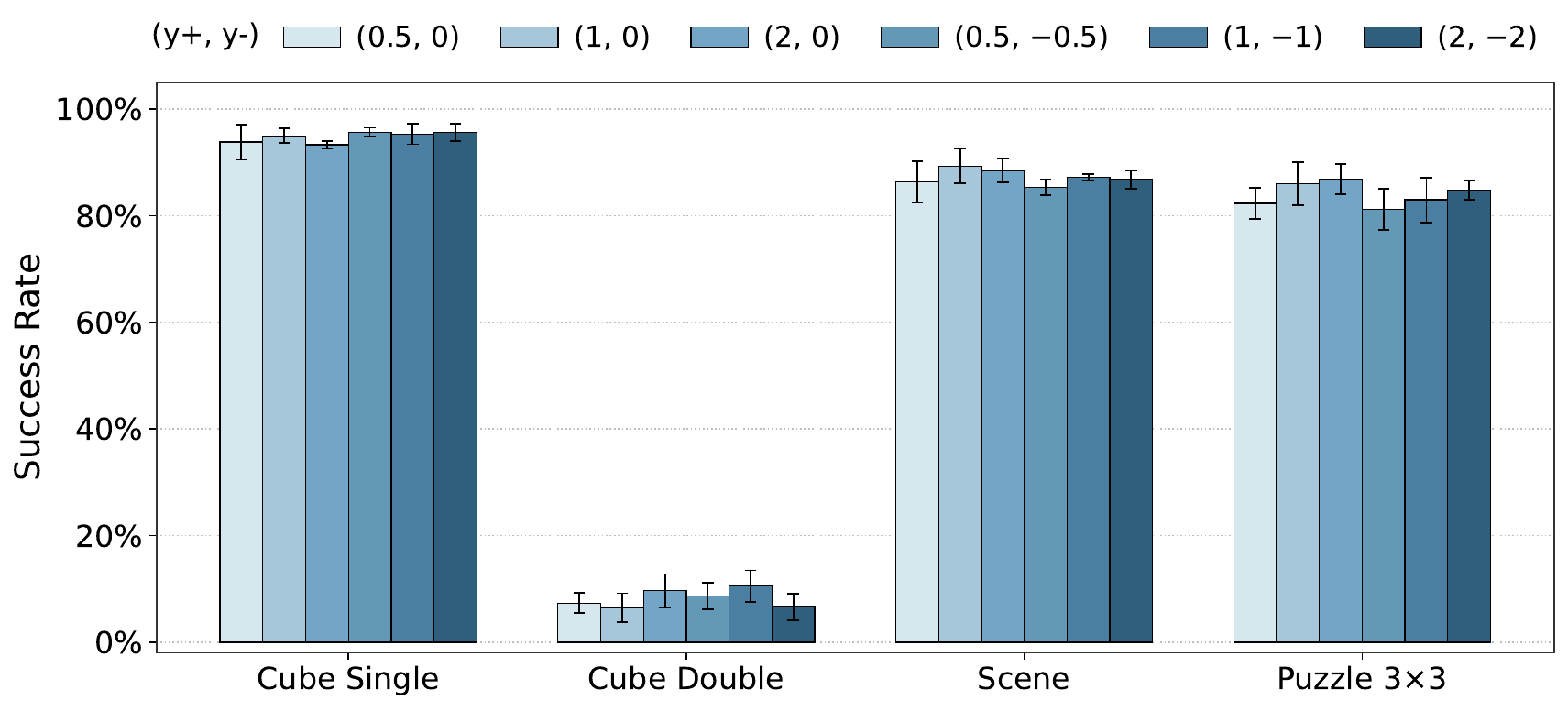}
    \caption{Task breakdown of \Cref{fig:dense-condition-pair}: sensitivity to contrastive condition pair $(y_+, y_-)$ and guidance weight $w$. \textit{Top:} Performance at different guidance weights. \textit{Bottom:} Performance at best guidance weight.}
    \label{fig:dense-condition-pair-details}
\end{figure}

\begin{figure}[h]
    \centering
    \includegraphics[width=0.95\linewidth]{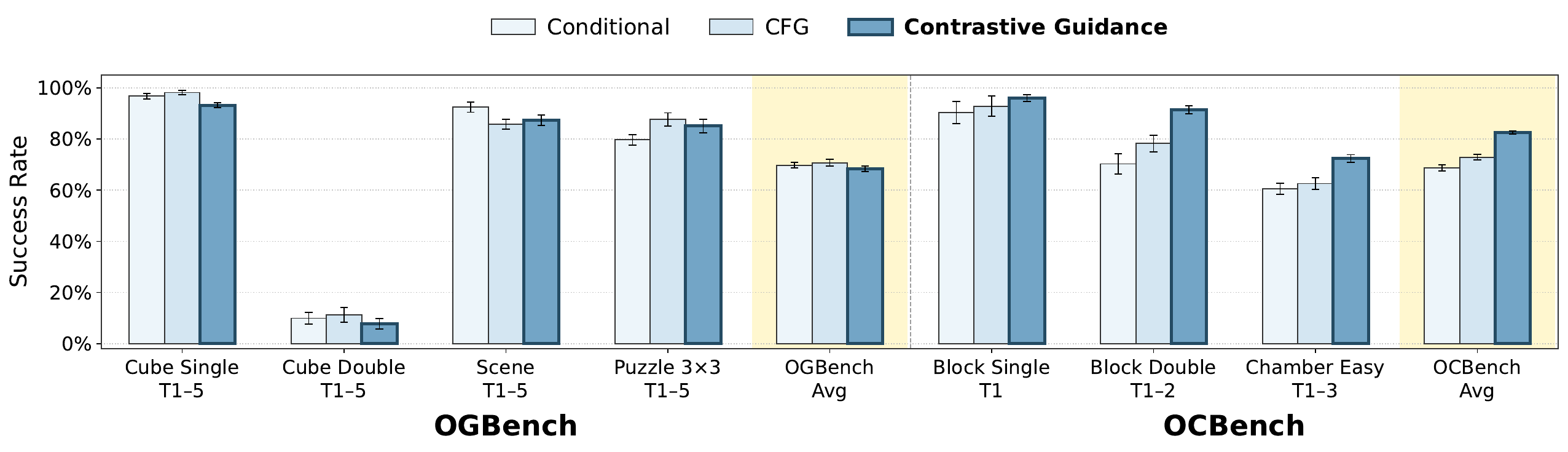}
    \caption{Domain breakdown of \Cref{fig:no-cfg}: Ablation of sampling method for \methodname{}. 10 random seeds.}
    \label{fig:no-cfg-domain-details}
\end{figure}

\begin{figure}[h]
    \centering
    \includegraphics[width=0.6\linewidth]{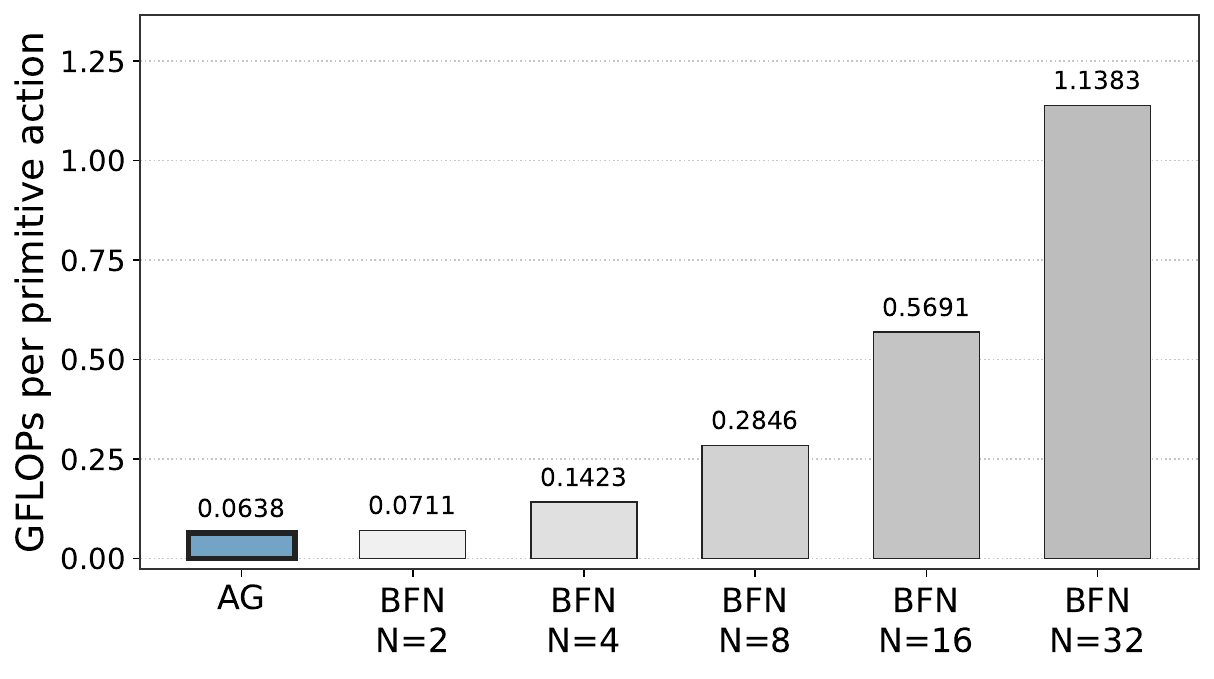}
    \caption{Compute Cost of BFN agents. The result is measured in terms of FLOPs per primitive action in the action chunk, and is averaged over OGBench and OCBench environments.}
    \label{fig:bfn-compute-cost}
\end{figure}

\clearpage
\subsection{Additional Results}
\label{apdx:additional-results}

\paragraph{Higher discount.}
\Cref{fig:main-results} shows that all methods perform poorly in long-horizon sparse reward \texttt{cube-double} tasks from OGBench. Here, we experiment whether a higher discount factor can improve performance. In \Cref{fig:higher-discount}, we use $\gamma=0.999$ instead of $0.99$, and find that all agents benefit from a larger discount. QGF and \methodname{} in particular benefit the most, and both significantly outperform the other baseline methods.

\begin{figure}[h]
    \centering
    \includegraphics[width=0.9\linewidth]{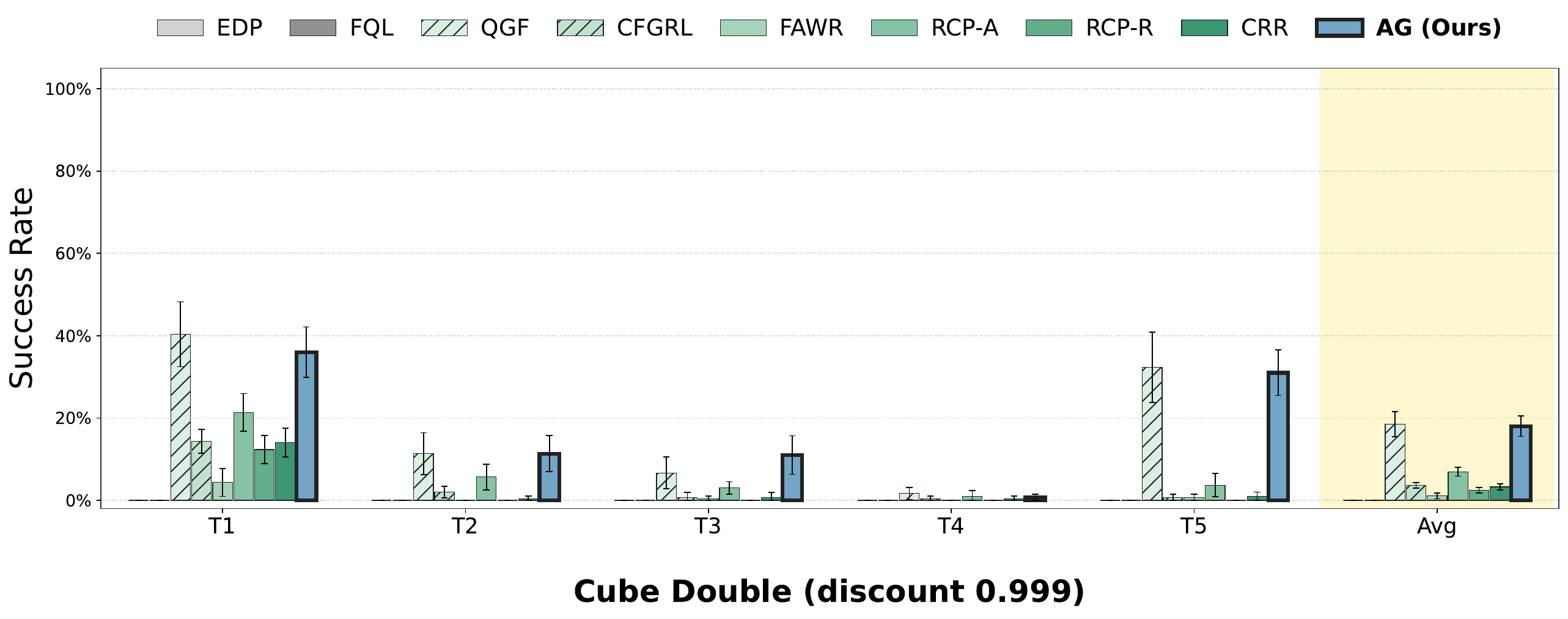}
    \caption{Offline RL performance on \texttt{cube-double} with discount factor $0.999$.}
    \label{fig:higher-discount}
\end{figure}

\end{document}